\def\eqref#1{equation~\ref{#1}}
\def\1{\bm{1}}
\def\vq{{\bm{q}}}
\DeclareMathAlphabet{\mathsfit}{\encodingdefault}{\sfdefault}{m}{sl}
\SetMathAlphabet{\mathsfit}{bold}{\encodingdefault}{\sfdefault}{bx}{n}
\def\gA{{\mathcal{A}}}
\def\gB{{\mathcal{B}}}
\def\gD{{\mathcal{D}}}
\def\gH{{\mathcal{H}}}
\def\gX{{\mathcal{X}}}
\newcommand{\scrD}{\mathscr{D}}
\newcommand{\scrH}{\mathscr{H}}
\newcommand{\tildeh}{\tilde{h}}
\newcommand{\E}{\mathbb{E}}
\DeclareMathOperator*{\argmax}{arg\,max}
\definecolor{myblue}{rgb}{0,0.3,0.7}
\newcommand\numberthis{\addtocounter{equation}{1}\tag{\theequation}}
\newtheorem{theorem}{Theorem}[section]
\newtheorem{proposition}{Proposition}[section]
\newtheorem{corollary}{Corollary}[theorem]
\theoremstyle{definition}
\newtheorem{definition}{Definition}[section]
\newcommand{\testerr}{\texttt{TestErr}}
\newcommand{\disag}{\texttt{Dis}}
\title{Assessing Generalization via Disagreement}
\author{Yiding Jiang \thanks{Equal contribution} \\
Carnegie Mellon University \\ 
\texttt{ydjiang@cmu.edu} \\
\And
Vaishnavh Nagarajan \footnotemark[1]  \,\thanks{Work performed when Vaishnavh Nagarajan was a student at Carnegie Mellon University.} \\
Google Research \\ 
\texttt{vaishnavh@google.com}\\
\And 
Christina Baek, J. Zico Kolter \\
Carnegie Mellon University\\
\texttt{\{kbaek,zkolter\}@cs.cmu.edu} \\
}
\begin{document}

\maketitle

\begin{abstract}
We empirically show that the test error of deep networks can be estimated by training the same architecture on the same training set but with two different runs of Stochastic Gradient Descent (SGD), and then measuring the disagreement rate between the two networks on \textit{unlabeled test data}. This builds on --- and is a stronger version of --- the observation in \citet{nakkiran20distributional}, which requires the runs to be on separate training sets.
We further theoretically show that this peculiar phenomenon arises from the \emph{well-calibrated} nature of \emph{ensembles} of SGD-trained models. This finding not only provides a simple empirical measure to directly predict the test error using unlabeled test data, but also establishes a new conceptual connection between generalization and calibration.

\end{abstract}

\section{Introduction}

Consider the following intriguing observation made in \citet{nakkiran20distributional}. Train two networks of the same architecture to zero training error on two independently drawn datasets $S_1$ and $S_2$ of the same size. Both networks would achieve a test error (or equivalently, a generalization gap) of about the same value, denoted by $\epsilon$. Now, take a fresh unlabeled dataset $U$ and measure the rate of disagreement of the predicted label between these two networks on $U$. Based on the triangle inequality, one can quickly surmise that this disagreement
 rate could lie anywhere between $0$ and $2\epsilon$. However, across various training set sizes and for various models like neural networks, kernel SVMs and decision trees,  \citet{nakkiran20distributional} (or \citetalias{nakkiran20distributional} in short) report that the disagreement rate not only linearly correlates with the test error $\epsilon$, but nearly {\em equals} $\epsilon$ (see first two plots in Fig~\ref{fig:scatter}).
 What brings about this unusual equality?
 Resolving this open question from \citetalias{nakkiran20distributional} could help us identify fundamental patterns in how neural networks make errors. That might further shed insight into  generalization and other poorly understood empirical phenomena in deep learning. 
 
 \begin{figure}[h]
     \centering
     \begin{subfigure}[t]{0.23\textwidth}
         \centering
        \includegraphics[width=\textwidth]{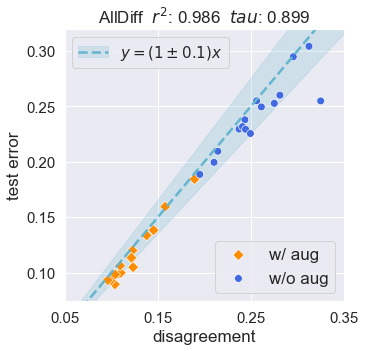}
         \label{fig:alldiff_scatter}
     \end{subfigure}
     \begin{subfigure}[t]{0.23\textwidth}
         \centering
                  \includegraphics[width=\textwidth]{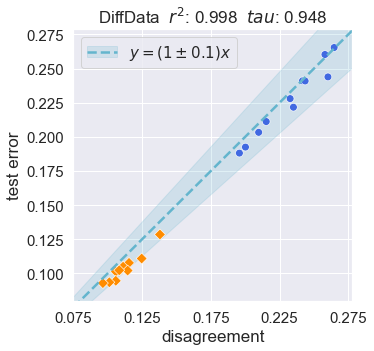}
         \label{fig:sameinit_scatter}
     \end{subfigure}
     \begin{subfigure}[t]{0.23\textwidth}
         \centering                  
         \includegraphics[width=\textwidth]{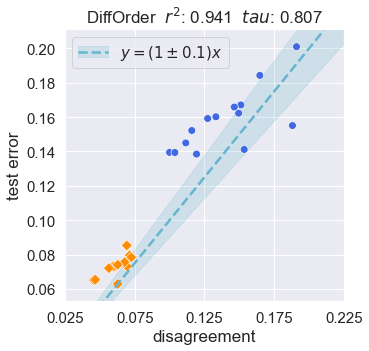}
         \label{fig:samedata_scatter}
     \end{subfigure}
    \begin{subfigure}[t]{0.23\textwidth}
         \centering
            \includegraphics[width=\textwidth]{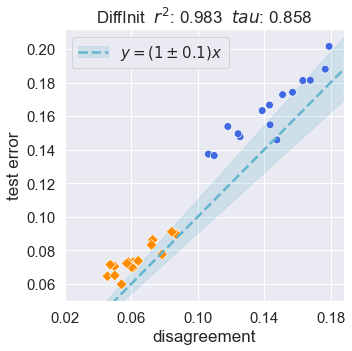}
         \label{fig:sameorder_scatter}
     \end{subfigure}
     \vspace{-0.2in}
        \caption{\textbf{GDE on CIFAR-10:} The scatter plots of pair-wise model disagreement (x-axis) vs the test error (y-axis) of the different ResNet18 trained on CIFAR10. The dashed line is the diagonal line where disagreement equals the test error. Orange dots represent models that use data augmentation. The first two plots correspond to pairs of networks trained on independent datasets, and in the last two plots, on the same dataset. The details are described in Sec~\ref{sec:main-observation}.}
        \label{fig:scatter}
\end{figure}

In this work, we first identify a stronger observation. 
Consider two neural networks trained with the same hyperparameters and {\em the same dataset}, but with different random seeds (this could take the form e.g., of the data being presented in different random orders and/or by using a different random initialization of the network weights). We would expect the disagreement rate in this setting to be much smaller than in \citetalias{nakkiran20distributional}, since both models see the same data. Yet, this is not the case: we observe on the SVHN~\citep{netzer2011reading}, CIFAR-10/100~\citep{krizhevsky2009learning} datasets, and for variants of Residual Networks \citep{he2016deep} and Convolutional Networks \citep{lin2013network}, that the disagreement rate is still approximately equal to the test error  (see last two plots in Fig~\ref{fig:scatter}), only slightly deviating from the behavior in \citetalias{nakkiran20distributional}.
In fact, while \citetalias{nakkiran20distributional} show that the disagreement rate captures significant changes in test error with varying training set sizes, we highlight a much stronger behavior:  the disagreement rate is able to capture even {\em minute variations in the test error under varying hyperparameters like width, depth and batch size}.  Furthermore, we show that under certain training conditions, these properties even hold on many kinds of \emph{out-of-distribution} data in the PACS dataset \citep{li17deeper}, albeit not on all kinds.

The above observations not only raise deeper conceptual questions about the behavior of deep networks but also crucially yield a practical benefit. In particular, our disagreement rate does not require fresh labeled data (unlike the rate in \citetalias{nakkiran20distributional}) and rather only requires fresh {\em unlabeled} data. Hence, ours is a more meaningful and practical estimator of test accuracy (albeit, a slightly less accurate estimate at that). Indeed, {\em unsupervised accuracy estimation} is valuable for real-time evaluation of models when test labels are costly or unavailable to due to privacy considerations \citep{donmez10unsupervised,jaffe15estimating}. While there are also many other measures that correlate with generalization without access to even unlabeled test data \citep{jiang2018predicting,yak19towards,Jiang2020Fantastic,jiang2020neurips,natekar20representation, unterthiner20predicting}, these require (a) computing intricate proportionality constants and (b) knowledge of the neural network weights/representations. Disagreement, however, provides a direct estimate, and works even with a black-box model, which makes it practically viable when the inner details of the model are unavailable due to privacy concerns.

In the second part of our work, we theoretically investigate these observations. Informally stated, we prove that if the \textit{ensemble} learned from different stochastic runs of the training algorithm (e.g., across different random seeds)
is \textit{well-calibrated} (i.e., the predicted probabilities are neither over-confident nor under-confident), then the disagreement rate equals the test error (in expectation over the training stochasticity). Indeed, such kinds of SGD-trained deep network ensembles are known to be naturally calibrated in practice \citep{lakshmi17ensembles}. While we do not prove {\em why} calibration holds in practice, the fact the condition in our theorem is empirically satisfied implies that our theory offers a valuable insight into the practical generalization properties of deep networks.

Overall, our work establishes a new connection between generalization and calibration via the idea of disagreement. This has both theoretical and practical implications in understanding generalization and the effect of stochasticity in SGD. To summarize, our contributions are as follows:
\begin{enumerate}[leftmargin=1.5em]
    \item We prove that for any stochastic learning algorithm, {\em if} the algorithm leads to a well-calibrated ensemble on a particular data distribution, \textit{then} the ensemble satisfies the \textit{Generalization Disagreement Equality}\footnote{\citet{nakkiran20distributional}  refer to this as the {\em Agreement Property}, but we use the term Generalization Disagreement Equality to be more explicit and to avoid confusion regarding certain technical differences.} (GDE) on that distribution, {\em in expectation over the stochasticity of the algorithm}. Notably, our theory is general and makes no restrictions on the hypothesis class, the algorithm, the source of stochasticity, or the test distributions (which may be different from the training distribution). 
    
    \item We empirically show that  for Residual Networks~\citep{he2016deep}, convolutional neural networks \citep{lin2013network} and fully connected networks, and on CIFAR-10/100~\citep{krizhevsky2009learning} and SVHN~\citep{netzer2011reading}, GDE is nearly satisfied, even on pairs of networks trained on the same data with different random seeds. This yields a simple method that in practice accurately estimates the test error using unlabeled data in these settings. We also empirically show that the corresponding ensembles are well-calibrated (according to our particular definition of calibration) in practice. We do not, however, theoretically prove why this calibration holds.
    
    \item We present preliminary observations showing that GDE is approximately satisfied even for \textit{certain} distribution shifts within the PACS \citep{li17deeper} dataset. This implies that the disagreement rate can be a promising estimator even for out-of-distribution accuracy.
    \item We empirically find that different sources of stochasticity in SGD are almost equally effective  in terms of their effect on GDE and calibration of deep models trained with SGD. We also explore the effect of pre-training on these phenomena.
\end{enumerate}

\section{Related Works}
\paragraph{Understanding and predicting generalization.} 

Conventionally, generalization in deep learning has been studied through the lens of PAC-learning~\citep{Vapnik1971ChervonenkisOT, valiant1984theory}. Under this framework, generalization is roughly equivalent to bounding the size of the search space of a learning algorithm. 
Representative works in this large area of research include \citet{neyshabur2014search,neyshabur17exploring, neyshabur18pacbayes, dziugaite2017computing, bartlett17spectral, nagarajan2019deterministic, nagarajan2019generalization,generalizationworkshop}. Several works have questioned whether these approaches are truly making progress toward understanding generalization in overparameterized settings \citep{belkin18understand,nagarajan19uniform,Jiang2020Fantastic,dziugaite2020search}. 
Subsequently, recent works have proposed unconventional ways to derive generalization bounds \citep{negrea20defense,zhou20uc,garg21unlabeled}. 
Indeed, even our disagreement-based estimate marks a significant departure from complexity-based approaches to generalization bounds. 
Of particular relevance here is \citet{garg21unlabeled} who also leverage unlabeled data. 
Their bound requires modifying the original training set and then performing a careful early stopping, and is thus inapplicable to (and becomes vacuous for) interpolating models. While our estimate applies to the original training process, our guarantee applies only if we know {\em a priori} that the training procedure results in well-calibrated ensembles. Finally, it is worth noting that much older work  \citep{madani04covalidation} has provided bounds on the test error as a function of (rather than based on a ``direct estimate'' of) disagreement. However, these require the two runs to be on independent training sets.

While there has been research in unsupervised accuracy estimation \citep{donmez10unsupervised,platanios17estimating,jaffe15estimating,steinhardt16unsupervised,elsahar19annotate,schelter20learning,chuang20estimating}, the focus has been on out-of-distribution and/or specialized learning settings. Hence, they require specialized training algorithms or extra information about the tasks. Concurrent work \citet{chen2021detecting} here has made similar discoveries regarding estimating accuracy via agreement, although their focus is more algorithmic than ours. We discuss this in Appendix~\ref{app:related}.

\paragraph{Reducing churn.}  A line of work has looked at reducing disagreement (termed there as ``churn'') to make predictions more reproducible and easy-to-debug
 \citep{milani2016launch,jiang2021churn,bhojanapalli2021reproducibility}. \citet{bhojanapalli2021reproducibility} further analyze how different sources of stochasticity  can lead to non-trivial disagreement rates.

\paragraph{Calibration.} Calibration of a statistical model is the property that the probability obtained by the model reflects the true likelihood of the ground truth \citep{murphy1967verification, dawid1982well}. A well-calibrated model provides an accurate confidence on its prediction which is paramount for high-stake decision making and interpretability. In the context of deep learning, several works \citep{guo2017calibration, lakshmi17ensembles, fort2019deep, wu2021should, bai2021don, mukhoti2021deterministic} have found that while individual neural networks are usually over-confident about their predictions, ensembles of several independently and stochastically trained models tend to be naturally well-calibrated. In particular, two types of ensembles have typically been studied, depending on whether the members are trained on independently sampled data also called {\em bagging} \citep{breiman96bagging} or on the same data but with different random seeds (e.g., different random initialization and data ordering) also called {\em deep ensembles} \citep{lakshmi17ensembles}. The latter typically achieves better accuracy and calibration \citep{why20nixon}.

On the theoretical side, \citet{zhu20ensemble} have studied why deep ensembles outperform individual models in terms of accuracy. Other works studied post-processing methods of calibration \citep{kumar19verified}, established relationships to confidence intervals \citep{gupta20distribution}, and derived upper bounds on calibration error either in terms of sample complexity or in terms of the accuracy \citep{bai2021don,ji21early,liu19implicit,jung20moment,shabat20sample}.

The discussion in our paper complements the above works in multiple ways. First, most works within the machine learning literature focus on {\em top-class calibration}, which is concerned only with the confidence level of the top predicted class for each point. The theory in our work, however, requires looking at the confidence level of the model aggregated over all the classes. We then empirically show that SGD ensembles are well-calibrated even in this class-aggregated sense. Furthermore, we carefully investigate what sources of stochasticity result in well-calibrated ensembles.
Finally, we provide an exact formal relationship between generalization and calibration via the notion of disagreement, which is fundamentally different from existing theoretical calibration bounds.

\paragraph{Empirical phenomena in deep learning.} Broadly, our work falls in the area of research on identifying \& understanding empirical phenomena in deep learning \citep{deepphenomena}, especially in the context of overparameterized models that interpolate. Some example phenomena include the generalization puzzle \citep{zhang2016understanding,neyshabur2014search}, double descent  \citep{belkin19reconciling,nakkiran20deep}, and simplicity bias \citep{kalimeris19sgg,arpit17closer}. 
As stated earlier, we particularly build on \citetalias{nakkiran20distributional}'s empirical observation of the Generalization Disagreement Equality (GDE) in pairs of models trained on independently drawn datasets. They provide a proof of GDE for 1-nearest-neigbhor classifiers under specific distributional assumptions, while our result is different, and much more generic. Due to space constraints, we defer a detailed discussions of the relationship between our works in Appendix~\ref{app:related}.

\section{Disagreement Tracks Generalization Error}
\label{sec:main-observation}

We demonstrate on various datasets and architectures that the test error can be estimated directly by training two runs of SGD and measuring their disagreement on an unlabeled dataset. Importantly, we show that the disagreement rate can track even minute variations in the test error induced by varying hyperparameters.
Remarkably, this estimate does not require an independent labeled dataset.

\paragraph{Notations.} Let $h: \gX \to [K]$ denote a hypothesis from a hypothesis space $\gH$, where $[K]$ denotes the set of $K$ labels $\{0, 1, \hdots, K-1\}$. Let $\scrD$ be a distribution over $\gX \times [K]$. We will use $(X,Y)$ to denote the random variable with the distribution $\scrD$, and $(x,y)$ to denote specific values it can take. Let $\gA$ be a stochastic training algorithm that induces a distribution $\scrH_\gA$ over hypotheses in $\gH$. Let
 $h, h' \sim \scrH_\gA$ denote random hypotheses output by two independent runs of the training procedure.   We note that the stochasticity in $\gA$ could arise from any arbitrary source. 
 This may arise from either the fact that each $h$ is trained on a random dataset drawn from $\scrD$ or even a completely different distribution $\scrD'$.
The stochasticity could also arise from merely a different random initialization or data ordering. Next, we denote the test error and disagreement rate for hypotheses $h, h' \sim \scrH_\gA$ by:
 \begin{align}
     \testerr_{\scrD}(h) & \triangleq  \mathbb{E}_{\scrD}\left[\mathbbm{1}[h(X) \neq Y]\right] & \text{and} & &
     \disag_{\scrD}(h, h') & \triangleq \E_{\scrD}\left[\mathbbm{1}[h(X) \neq h'(X) ]\right]. \numberthis
\end{align}
Let $\Tilde{h}$ denote the ``ensemble'' corresponding to $h \sim \scrH_\gA$. In particular, define
  \begin{align}
  \Tilde{h}_k(x) \triangleq \E_{\scrH_\gA}\left[\mathbbm{1}[h(x) = k]\right]
  \end{align}
  to be the probability value (between $[0,1]$) given by the ensemble $\Tilde{h}$ for the $k^{th}$ class. Note that the output of $\tilde{h}$ is {\em not} a one-hot value based on plurality vote.

\paragraph{Main Experimental Setup.} We report our main observations on variants of Residual Networks,
convolutional neural networks and fully connected networks trained with Momentum SGD on  CIFAR-10/100, 
and SVHN.
Each variation of the ResNet has a unique hyperparameter configuration (See Appendix \ref{sec:exp_details} for details) and all models are (near) {interpolating}. For each hyperparameter setting, we train two copies of models which experience two independent draws from one or more sources of stochasticity, namely 1. \textit{random initialization} (denoted by \texttt{Init}) and/or 2. \textit{ordering of a fixed training dataset} (\texttt{Order}) and/or 3. \textit{different (disjoint) training data} (\texttt{{Data}}).
We will use the term \texttt{Diff}
to denote whether a source of stochasticity is ``on''. 
For example, \texttt{DiffInit} means that the two models have different initializations but see the same data in the same order. In \texttt{DiffOrder}, models share the same initialization and see the same data, but in different orders. In \texttt{DiffData}, the models share the initialization, but see different data.
In \texttt{AllDiff}, the two models differ in both data and in initialization (If the two models differ in data, the training data is split into two disjoint halves to ensure no overlap.).  The disagreement rate between a pair of models is computed as the proportion of the test data on which the (one-hot) predictions of the two models do not match. 

\paragraph{Observations.} We provide scatter plots of test error of the first run ($y$) vs disagreement error between the two runs ($x$) for CIFAR-10, SVHN and CIFAR-100 in Figures \ref{fig:scatter}, \ref{fig:svhn-scatter} and \ref{fig:cifar100-scatter} respectively (and for CNNs on CIFAR-10 in Fig~\ref{fig:other-datasets}). Naively, we would expect these scatter plots to be arbitrarily distributed anywhere between $y = 0.5 x$ (if the errors of the two models are disjoint) and $x = 0$ (if the errors are identical).
However, in all these scatter plots, we observe that test error and disagreement error lie very close to the diagonal line $y=x$  across different sources of stochasticity, while only slightly deviating  in \texttt{DiffInit/Order}. In particular, in \texttt{AllDiff} and \texttt{DiffData}, the points typically lie between $y =x$ and $y = 0.9 x$ while in \texttt{DiffInit} and \texttt{DiffOrder}, the disagreement rate drops slightly (since the models are trained on the same data) and so the points typically lie between $y = x$ and $y=1.3x$. We quantify correlation  via  the $R^2$ coefficient and Kendall's Ranking coefficient (\texttt{tau}) reported on top of each scatter plot.  Indeed, we observe
that these quantities are high in all the settings, generally above $0.85$. If we focus only on the data-augmented or the non-data-augmented models, $R^2$ tends to range a bit lower, around $0.7$ and $\tau$ around 0.6 (see Appendix~\ref{sec:correlation}).

\begin{figure}[t]
     \centering
     \begin{subfigure}[t]{0.23\textwidth}
         \centering
        \includegraphics[width=\textwidth]{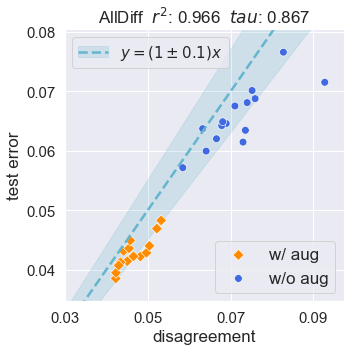}
         \label{fig:svhn_alldiff_scatter}
     \end{subfigure}
     \begin{subfigure}[t]{0.23\textwidth}
         \centering
                  \includegraphics[width=\textwidth]{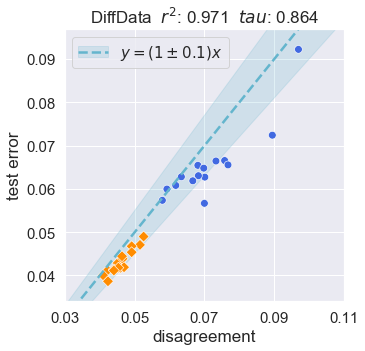}
         \label{fig:svhn_sameinit_scatter}
     \end{subfigure}
     \begin{subfigure}[t]{0.23\textwidth}
         \centering                  
         \includegraphics[width=\textwidth]{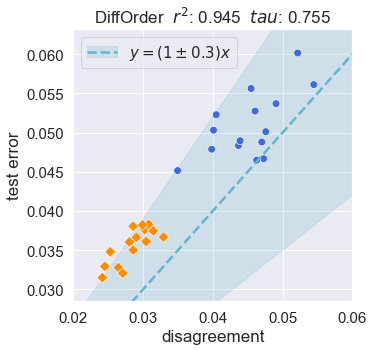}
         \label{fig:svhn_samedata_scatter}
     \end{subfigure}
    \begin{subfigure}[t]{0.23\textwidth}
         \centering
            \includegraphics[width=\textwidth]{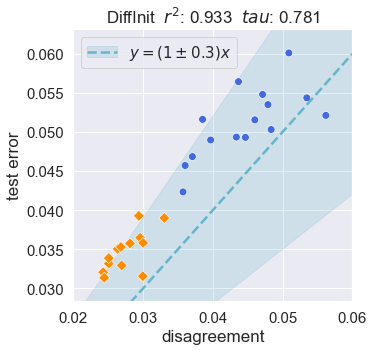}
         \label{fig:svhn_sameorder_scatter}
     \end{subfigure}
     \vspace{-2mm}
        \caption{\textbf{GDE on SVHN:} The scatter plots of pair-wise model disagreement (x-axis) vs the test error (y-axis) of the different ResNet18 trained on SVHN.}
        \label{fig:svhn-scatter}\bigskip 
     \centering
     \begin{subfigure}[t]{0.23\textwidth}
         \centering
        \includegraphics[width=\textwidth]{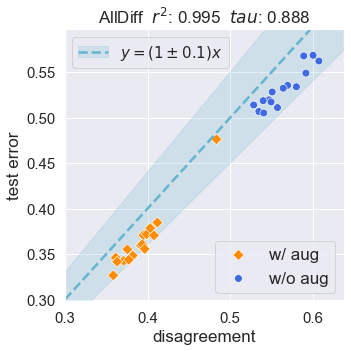}
     \end{subfigure}
     \begin{subfigure}[t]{0.23\textwidth}
        \centering
        \includegraphics[width=\textwidth]{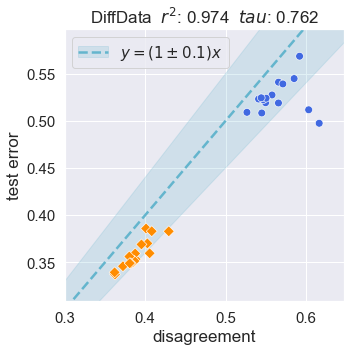}
     \end{subfigure}
     \begin{subfigure}[t]{0.23\textwidth}
         \centering                  
         \includegraphics[width=\textwidth]{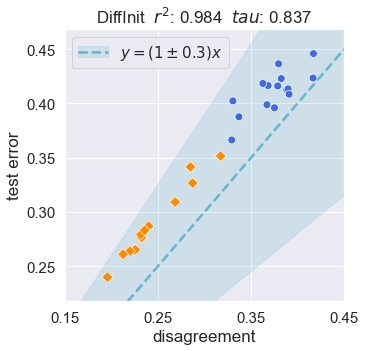}
     \end{subfigure}
    \begin{subfigure}[t]{0.23\textwidth}
        \centering
        \includegraphics[width=\textwidth]{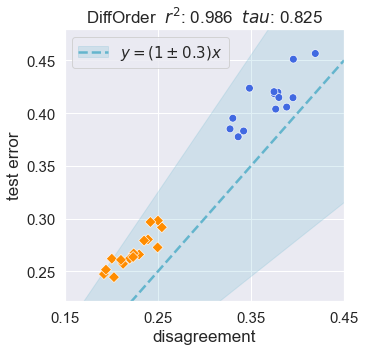}
     \end{subfigure}
    \caption{\textbf{GDE on CIFAR-100:}  The scatter plots of pair-wise model disagreement (x-axis) vs the test error (y-axis) of the different ResNet18 trained on CIFAR100.}
    \label{fig:cifar100-scatter}\bigskip

     \centering
     \begin{subfigure}[t]{0.23\textwidth}
         \centering
        \includegraphics[width=\textwidth]{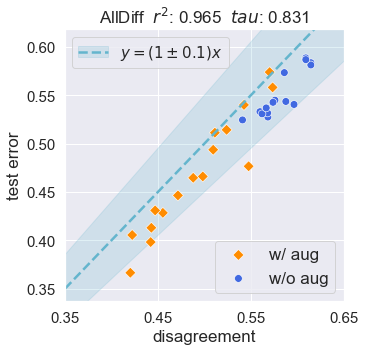}
     \end{subfigure}
     \begin{subfigure}[t]{0.23\textwidth}
        \centering
        \includegraphics[width=\textwidth]{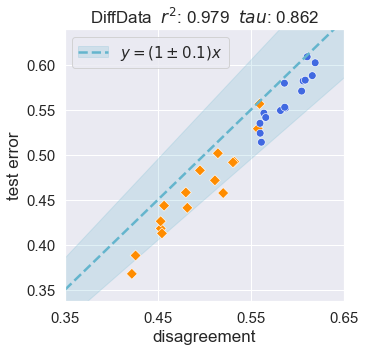}
     \end{subfigure}
     \begin{subfigure}[t]{0.23\textwidth}
         \centering                  
         \includegraphics[width=\textwidth]{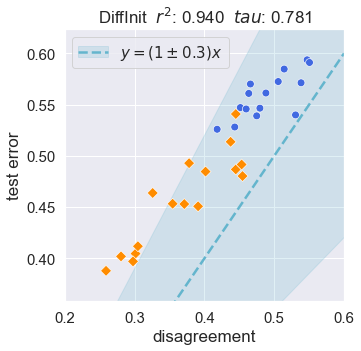}
     \end{subfigure}
    \begin{subfigure}[t]{0.23\textwidth}
        \centering
        \includegraphics[width=\textwidth]{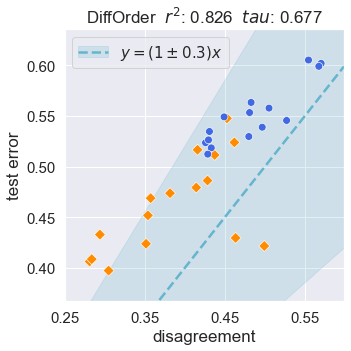}
     \end{subfigure}
    \caption{\textbf{GDE on 2k subset of CIFAR-10:} The scatter plots of pair-wise model disagreement (x-axis) vs the test error (y-axis) of the different ResNet18 trained on only 2000 points of CIFAR10.}
    \label{fig:cifar10-2k-scatter}
    \vspace{-3mm}
\end{figure}
The positive observations about \texttt{DiffInit} and \texttt{DiffOrder} are surprising for two reasons. First, when the second network is trained on the \textit{same} dataset, we would expect its predictions to be largely aligned with the original network --- naturally, the disagreement rate would be negligible, and the equality observed in \citetalias{nakkiran20distributional} would no longer hold. Furthermore, since we calculate the disagreement rate without using a fresh labeled dataset, we would expect disagreement to be much less predictive of test error when compared to \citetalias{nakkiran20distributional}.  Our observations defy both these expectations. 

There are a few more noteworthy aspects.
In the low data regime where the test error is high, we would expect the models to be much less well-behaved. However, consider the CIFAR-100 plots (Fig~\ref{fig:cifar100-scatter}), and additionally, the plots in
Fig~\ref{fig:cifar10-2k-scatter} where we train on CIFAR-10 with just $2000$ training points. In both settings the network suffers an error as high as $0.5$ to $0.6$. Yet, we observe a 
behavior similar to the other settings (albeit with some deviations) --- the scatter plot lies in $y = (1\pm 0.1)x$ (for \texttt{AllDiff} and \texttt{DiffData}) and in $y=(1\pm 0.3)x$ (for \texttt{DiffInit/Order}), and the correlation metrics are high. Similar results were established in \citetalias{nakkiran20distributional} for \texttt{AllDiff} and \texttt{DiffData}. 

Finally, it is important to highlight that each scatter plot here corresponds to varying certain hyperparameters that cause only mild variations in the test error. Yet, the disagreement rate is able to capture those variations in the test error. This is a stronger version of the finding in \citetalias{nakkiran20distributional} that disagreement captures larger variations under varying dataset size.

\paragraph{Effect of distribution shift and pre-training}

We study these observations in the context of the PACS dataset, a popular domain generalization benchmark with four  distributions, \texttt{Photo} (\texttt{P} in short), \texttt{Art} (\texttt{A}), \texttt{Cartoon} (\texttt{C}) and \texttt{Sketch} (\texttt{S}), all sharing the same 7 classes. On any given domain, we train pairs of ResNet50 models. Both models are either randomly initialized or ImageNet \citep{deng2009imagenet} pre-trained. We then evaluate their test error and disagreement on all the domains. As we see in Fig~\ref{fig:distributionshift}, the surprising phenomenon here is that there {\em are} many pairs of source-target domains where GDE is
approximately satisfied despite the distribution shift. Notably, for pre-trained models, with the exception of three pairs of source-target domains (namely, $(\texttt{P},\texttt{C})$, $(\texttt{P},\texttt{S})$, $(\texttt{S},\texttt{P})$),
 GDE is satisfied approximately. The other notable observation is that under distribution shift, pre-trained models can satisfy GDE, and often better than randomly initialized models. This is counter-intuitive, since we would expect pre-trained models to be strongly predisposed towards specific kinds of features, resulting in models that disagree rarely. %
See Appendix~\ref{sec:exp_details} for hyperparameter details.

\begin{figure}[h]
    \centering
    \includegraphics[width=1\textwidth]{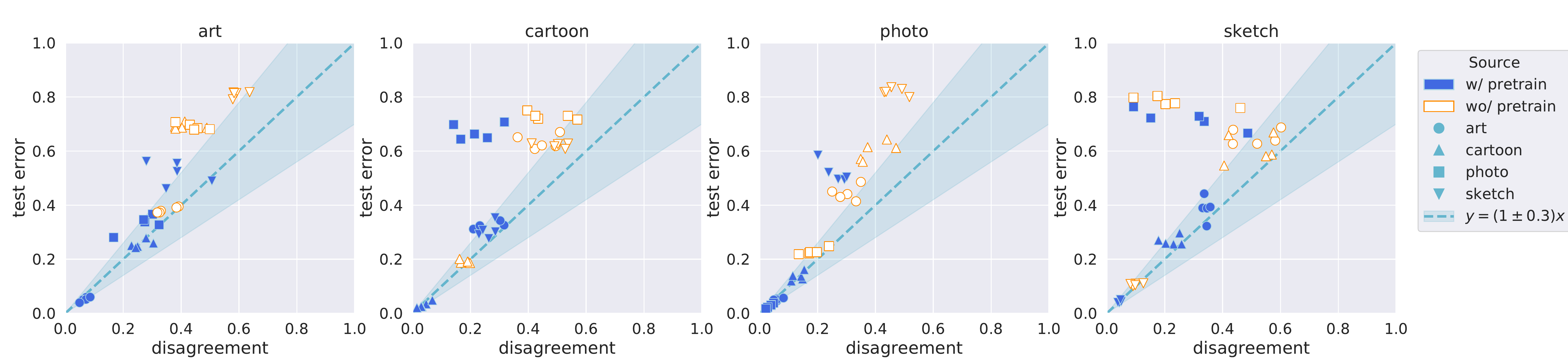}
    \caption{\textbf{GDE under distribution shift:}  The scatter plots of pair-wise model disagreement (x-axis) vs the test error (y-axis) of the different ResNet50 trained on PACS. Each plot corresponds to models evaluated on the domain specified in the title. The marker shapes indicate the source domain.
    }
    \label{fig:distributionshift}
\end{figure}

\section{Calibration Implies The GDE}
\label{sec:cal_imp_gde}

We now formalize our main observation, like it was formalized in \citetalias{nakkiran20distributional} (although with minor differences to be more general). In particular, we define ``the Generalization Disagreement Equality'' as the phenomenon that the test error equals the disagreement rate {\em in expectation over $h \sim \scrH_\gA$}.

\begin{definition}
\label{def:disagreement-property}
The stochastic learning algorithm $\gA$ satisfies \textbf{the Generalization Disagreement Equality  (GDE)} on the distribution $\scrD$ if,
\begin{equation}
\mathbb{E}_{h, h' \sim \scrH_\gA}[\disag_\scrD(h,h')] =  \mathbb{E}_{h \sim \scrH_\gA}[\testerr_\scrD(h)].
\end{equation}
\end{definition}

Note that the definition does not imply that the equality holds for each pair of $h, h'$ (which we observed empirically). However, for simplicity, we will stick to the above ``equality in expectation'' as it captures the essence of the underlying phenomenon while being easier to analyze.
To motivate why proving this equality is non-trivial, let us look at the most natural hypothesis that \citetalias{nakkiran20distributional} identify.
 Imagine that all datapoints $(x,y)$ are one of two types: (a) the datapoint is so ``easy'' that w.p. 1 over $h \sim \scrH_\gA$, $h(x) = y$  (b) the datapoint is so ``hard'' that $h(x)$ corresponds to picking a label uniformly at random. In such a case, with a simple calculation, one can see that the above equality would hold not just in expectation over $\scrD$, but even point-wise: for each $x$, the disagreement  on $x$ in expectation over $\scrH_\gA$ would equal the error on $x$ in expectation over $\scrH_\gA$ (namely ${(K-1)}/{K}$ if $x$ is hard, and $0$ if easy).  Unfortunately, \citetalias{nakkiran20distributional} show that in practice, a significant fraction of the points have 
  disagreement larger than error and another fraction have error larger than disagreement
  (see Appendix~\ref{sec:error-histogram}). Surprisingly though, there is a delicate balance between these two types of points such that overall these disparities cancel each other out giving rise to the GDE.

What could create this delicate balance? We identify that this can arise from the fact that {\em the ensemble $\tilde{h}$ is well-calibrated}. Informally, a well-calibrated model is one whose output probability for a particular class (i.e., the model's ``confidence'') is indicative of the probability that the ground truth class is indeed that class (i.e., the model's ``accuracy''). There are many ways in which calibration can be formalized. Below, we provide a particular formalism called class-wise calibration.

\begin{definition}
\label{def:calibration}
The ensemble model $\tildeh$ 
satisfies \textbf{class-wise calibration}
on $\scrD$ if for any confidence value $q \in [0,1]$ and for any class $k \in [K]$,
\begin{equation}
    p ( Y = k \mid \tildeh_k(X) = q  ) = q.
\end{equation}
\end{definition}
\vspace{-4pt}

Next, we show that if the ensemble is class-wise calibrated on the distribution $\scrD$, then GDE does hold on $\scrD$. Note however that shortly we show a more general result where even a weaker notion of calibration is sufficient to prove GDE. But since this stronger notion of calibration is  easier to understand, and the proof sketch for this captures the key intuition of the general case, we will focus on this first in detail. It is worth emphasizing that besides requiring well-calibration on the (test) distribution, all our theoretical results are general. We do not restrict the hypothesis class (it need not necessarily be neural networks), or the test/training distribution (they can be different, {\em as long as calibration holds in the test distribution}), or where the stochasticity comes from (it need not necessarily come from the random seed or the data).

\begin{theorem}
\label{thm:main} 
Given a stochastic learning algorithm $\gA$, if its corresponding ensemble $\tildeh$ satisfies class-wise calibration on $\scrD$, then $\gA$ satisfies the Generalization Disagreement Equality on $\scrD$.
\end{theorem}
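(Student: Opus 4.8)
The plan is to rewrite both sides of the GDE purely in terms of the ensemble probabilities $\tilde{h}_k$ and then use class-wise calibration to identify the two resulting expressions; throughout, I would first condition on $X=x$, compute everything in closed form, and only take the expectation over $\scrD$ at the end. \textbf{Step 1 (disagreement, pointwise).} For a fixed input $x$, the predictions $h(x)$ and $h'(x)$ of two independent runs are i.i.d.\ draws from the categorical law $(\tilde{h}_0(x),\dots,\tilde{h}_{K-1}(x))$ on $[K]$ — this is just the definition of $\tilde{h}$ together with independence of the runs. Hence the collision probability is $\sum_{k}\tilde{h}_k(x)^2$, so $\mathbb{E}_{h,h'}[\mathbbm{1}[h(x)\neq h'(x)]]=1-\sum_k \tilde{h}_k(x)^2$. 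Taking $\mathbb{E}_{X\sim\scrD}$ and using Fubini (the integrand is a bounded indicator) to pull the expectation over $\scrH_\gA$ outside gives $\mathbb{E}_{h,h'\sim\scrH_\gA}[\disag_\scrD(h,h')]=1-\mathbb{E}_X\big[\sum_k \tilde{h}_k(X)^2\big]$.

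\textbf{Step 2 (test error in terms of $\tilde{h}$).} Similarly, swapping the expectation over $h$ with the one over $(X,Y)$, $\mathbb{E}_{h\sim\scrH_\gA}[\testerr_\scrD(h)]=\mathbb{E}_{(X,Y)}\mathbb{E}_{h}[\mathbbm{1}[h(X)\neq Y]]=\mathbb{E}_{(X,Y)}[\,1-\tilde{h}_Y(X)\,]=1-\mathbb{E}_{(X,Y)}[\tilde{h}_Y(X)]$, where I used $\mathbb{E}_h[\mathbbm{1}[h(x)=y]]=\tilde{h}_y(x)$. Comparing with Step 1, the theorem reduces to the single identity $\mathbb{E}_{(X,Y)}[\tilde{h}_Y(X)]=\mathbb{E}_X\big[\sum_k \tilde{h}_k(X)^2\big]$.

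\textbf{Step 3 (calibration closes the gap).} Write $\tilde{h}_Y(X)=\sum_{k}\mathbbm{1}[Y=k]\,\tilde{h}_k(X)$ and handle each $k$ separately. Conditioning on the $\sigma$-algebra generated by the random variable $\tilde{h}_k(X)$ and applying the tower rule, $\mathbb{E}[\mathbbm{1}[Y=k]\,\tilde{h}_k(X)]=\mathbb{E}\big[\tilde{h}_k(X)\cdot\Pr(Y=k\mid \tilde{h}_k(X))\big]$. Class-wise calibration (Definition~\ref{def:calibration}) says exactly that $\Pr(Y=k\mid \tilde{h}_k(X)=q)=q$ for every $q$, i.e.\ $\Pr(Y=k\mid \tilde{h}_k(X))=\tilde{h}_k(X)$ almost surely, so this term equals $\mathbb{E}[\tilde{h}_k(X)^2]$. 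Summing over $k\in[K]$ yields $\mathbb{E}_{(X,Y)}[\tilde{h}_Y(X)]=\mathbb{E}_X\big[\sum_k\tilde{h}_k(X)^2\big]$, which by Step 2 is precisely the GDE.

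\textbf{Main obstacle.} The only genuinely delicate point is Step 3: the confidence $\tilde{h}_k(X)$ generally has a continuous distribution, so ``conditioning on $\tilde{h}_k(X)=q$'' must be read through the $\sigma$-algebra it generates rather than as conditioning on a measure-zero event; stating calibration as the almost-sure identity $\Pr(Y=k\mid\tilde{h}_k(X))=\tilde{h}_k(X)$ and invoking the tower rule sidesteps this cleanly. Everything else is bookkeeping — linearity, Fubini for bounded integrands, the elementary collision-probability computation, and the normalization $\sum_k\tilde{h}_k(x)=1$ (used implicitly). It is worth noting that the argument invokes calibration only on $\scrD$ itself and says nothing about how $\gA$ was trained, matching the generality claimed in the statement; the same computation, with the calibration step replaced by a weaker conditional-expectation identity, should extend to the more general notion of calibration promised afterward.
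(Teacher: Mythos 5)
Your proof is correct and is essentially the paper's argument: the paper likewise reduces the expected disagreement to $\E_{X}\big[\sum_k \tilde{h}_k(X)(1-\tilde{h}_k(X))\big]$ and the expected test error to $\E_{(X,Y)}[1-\tilde{h}_Y(X)]$, and then equates them by conditioning on the confidence value of each class, which is exactly your Step 3 written with explicit integrals over $q$ instead of the tower rule on $\sigma(\tilde{h}_k(X))$. The only organizational difference is that the paper proves the result under the weaker class-aggregated calibration (Theorem~\ref{thm:agg-main}) and obtains Theorem~\ref{thm:main} as a corollary — precisely the extension your closing remark anticipates, since your per-class identity only needs to hold after summing over $k$.
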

\vspace{-4pt}

\begin{proof}(Sketch for binary classification. Details for full multi-class classification are deferred to App. \ref{sec:main_corrol}.) 
Let $\gX_q$ correspond to a ``confidence level set'' in that $\gX_q = \{X \in \gX \mid \tilde{h}_0(X) = q\}$. {\em Our key idea is to show that for a class-wise calibrated ensemble, GDE holds
within each confidence level set} i.e., for each $q \in [0,1]$, the (expected) disagreement rate equals test error for the distribution $\scrD$ restricted to the support $\gX_q$. Since $\gX$ is a combination of these level sets,  it automatically follows that GDE holds over $\scrD$. It is worth contrasting this proof idea with the easy-hard explanation which requires showing that GDE holds point-wise, rather than confidence-level-set-wise.

Now, let us calculate the disagreement on $\gX_q$. For any fixed $x$ in $\gX_q$, the disagreement rate in expectation over $h,h' \sim \scrH_\gA$ corresponds to $q(1-q) + (1-q)q = 2q(1-q)$. This is simply the sum of the probability of the events that $h$ predicts $0$ and $h'$ predicts $1$, and vice versa. Next, we calculate the test error on $\gX_q$. At any $x$, the expected error equals $\tilde{h}_{1-y}(x)$. From calibration, we have that exactly $q$ fraction of $\gX_q$ has the true label $0$. On these points, the error rate is $\tilde{h}_{1}(x) = 1-q$. On the remaining $1-q$ fraction, the true label is $1$, and hence the error rate on those is $\tilde{h}_{0}(x) = q$. The total error rate across both the class $0$ and class $1$ points is therefore $q(1-q) + (1-q)q = 2q(1-q)$. 
\end{proof}
\vspace{-4pt}

\textbf{Intuition.} Even though the proof is fairly simple, it may be worth demystifying it a bit further. In short, a calibrated classifier ``knows'' how much error it commits in different parts of the distribution i.e., over points where it has a confidence of $q$, it commits an expected error of $1-q$. With this in mind, it is easy to see why it is even possible to predict the test error without knowing the test labels: we can simply average the confidence values of the ensemble to estimate test accuracy. The expected disagreement error provides an alternative but less obvious route towards estimating test performance. The intuition is that within any confidence level set of a calibrated ensemble, the marginal distribution over the ground truth labels becomes identical to the marginal distribution over the one-hot predictions sampled from the ensemble. Due to this, measuring the expected disagreement of the ensemble against \textit{itself} becomes equivalent to measuring the expected disagreement of the ensemble against \textit{the ground truth}. The latter is nothing but the ensemble's test error.

What is still surprising though is that in practice, we are able to get away with predicting test error by computing disagreement for a single pair of ensemble members --- and this works even though an ensemble of two models is {\em not} well-calibrated, as we will see later in Table~\ref{tab: calibration-sample-size}. This suggests that the variance of the disagreement and test error (over  the stochasticity of $\scrH_{\gA}$) must be unusually small; indeed, we will empirically verify this in Table~\ref{tab: calibration-sample-size}. In Corollary \ref{col:variance}, we present some preliminary discussion on why the variance could be small, leaving further exploration for future work.

\subsection{A more general result: class-wise to class-aggregated calibration}

We will now show that GDE holds under a more relaxed notion of calibration, which holds ``on average'' over the classes rather than individually for each class. Indeed, we demonstrate in a later section (see Appendix~\ref{sec:individual-class}) that this averaged notion of calibration holds more gracefully than class-wise calibration in practice. 
Recall that in class-wise calibration we look at the conditional probability 
${p(Y=k \mid \tilde{h}_k(X)=q)}$ 
for each $k$. Here, we will take an average of these conditional probabilities
by weighting the $k^{th}$ probability by $p(\tilde{h}_k(X)=q)$.  The result is the following definition:

\begin{definition}
\label{def:agg-calibration}
The ensemble $\tilde{h}$ 
satisfies \textbf{class-aggregated calibration}
on $\scrD$ if for each $q \in [0,1]$,
\begin{equation}
\scalebox{1.2}{\mbox{\ensuremath
$   \frac{\sum_{k=0}^{K-1}  p(Y=k, \tilde{h}_k(X) = q)}{\sum_{k=0}^{K-1}  p(\tilde{h}_k(X) = q)} = q.$
    }}
\end{equation}
\end{definition}

\textbf{Intuition.} The denominator here corresponds to the points where {\em some} class gets confidence value $q$; the numerator corresponds to the points where some class gets confidence value $q$ {\em and} that class also happens to be the ground truth. 
Note however both the proportions involve counting a point $x$ multiple times if $\tilde{h}_k(x)=q$ for multiple classes $k$. In Appendix \ref{sec:calibration_comp}, we discuss the relation between this new notion of calibration to existing definitions. In Appendix~\ref{sec:proof} Theorem~\ref{thm:agg-main}, we show 
that the above weaker notion of calibration is sufficient to show GDE. The proof of this theorem is a non-trivial generalization of the argument in the proof sketch of Theorem~\ref{thm:main}, and Theorem~\ref{thm:main} follows as a straightforward corollary since class-wise calibration implies class-aggregated calibration.

\textbf{Deviation from calibration.} For generality, we would like to consider ensembles that do not satisfy class-aggregated calibration precisely. How much can a deviation from calibration hurt GDE? To answer this question, we quantify calibration error as follows:

\begin{definition}
\label{def:deviation}
The \textbf{Class Aggregated Calibration Error} (CACE)
of an ensemble $\tilde{h}$ on $\scrD$ is 
\begin{equation}
\text{CACE}_{\scrD}(\tilde{h}) \triangleq
\scalebox{1.0}{\mbox{\ensuremath
$  {\displaystyle\int_{q \in [0,1]}}{ \left\vert \frac{\sum_{k}  p(Y=k, \tilde{h}_k(X) = q)}{\sum_{k}  p(\tilde{h}_k(X) = q)} - q
 \right\vert} \cdot \sum_{k}  p(\tilde{h}_k(X) = q) dq.$
 }}
\end{equation}
\end{definition}

In other words, for each confidence value $q$, we look at the absolute difference between the left and right hand sides of Definition~\ref{def:agg-calibration}, and then weight the difference by the proportion of instances where a confidence value of $q$ is achieved. %
It is worth keeping in mind that, while the absolute difference term lies in $[0,1]$, the weight terms alone would integrate to a value of $K$. Therefore, $\text{CACE}_{\scrD}(\tilde{h})$ can lie anywhere in the range $[0,K]$. Note that CACE is different from the ``expected calibration error (ECE)'' \citep{naeini15obtaining,guo2017calibration} commonly used in the machine learning literature, which applies only to top-class calibration.

We show below that GDE holds approximately when the calibration error is low (and naturally, as a special case, holds perfectly when calibration error is zero). The proof is deferred to Appendix \ref{sec:deviation}.

\begin{theorem}
\label{thm:deviation}
For any algorithm $\gA$, $ \left| \mathbb{E}_{\scrH_\gA}[\disag_\scrD(h,h')] - \mathbb{E}_{\scrH_\gA}[\testerr_\scrD(h)] \right| \leq \text{CACE}_{\scrD}(\tilde{h}).$
\end{theorem}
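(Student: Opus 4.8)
The plan is to compute $\E_{\scrH_\gA}[\disag_\scrD(h,h')]$ and $\E_{\scrH_\gA}[\testerr_\scrD(h)]$ exactly in terms of the ensemble $\tilde h$, subtract them, rewrite the difference as a single integral over the confidence level $q$, and recognize that integral as a signed version of the CACE integrand. First I would use independence of $h,h'\sim\scrH_\gA$: for fixed $x$, $\Pr_{h,h'}[h(x)=h'(x)]=\sum_k \tilde h_k(x)^2$, and since all integrands are bounded indicators we may freely swap $\E_{\scrH_\gA}$ and $\E_\scrD$, so that $\E_{\scrH_\gA}[\disag_\scrD(h,h')] = 1 - \E_X[\sum_k \tilde h_k(X)^2]$. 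Similarly $\E_h[\mathbbm{1}[h(x)\ne y]]=1-\tilde h_y(x)$ yields $\E_{\scrH_\gA}[\testerr_\scrD(h)] = 1 - \E_{(X,Y)}[\tilde h_Y(X)]$. Hence the quantity to bound equals
\[
\E_{(X,Y)}[\tilde h_Y(X)] - \E_X\Big[\sum_k \tilde h_k(X)^2\Big].
\]

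Next I would expand both terms by splitting over the class $k$ and over the value $q$ that $\tilde h_k(X)$ takes, writing $\E_{(X,Y)}[\tilde h_Y(X)] = \sum_k \int_0^1 q\, p(Y=k,\tilde h_k(X)=q)\,dq$ and $\E_X[\sum_k\tilde h_k(X)^2] = \sum_k \int_0^1 q^2\, p(\tilde h_k(X)=q)\,dq$. The difference then collapses to
\[
\int_0^1 q\Big(\sum_k p(Y=k,\tilde h_k(X)=q) - q\sum_k p(\tilde h_k(X)=q)\Big)\, dq,
\]
and factoring $\sum_k p(\tilde h_k(X)=q)$ out of the parenthesis (with the convention that the whole integrand is $0$ wherever this weight vanishes) rewrites the integrand as $q\big(\frac{\sum_k p(Y=k,\tilde h_k(X)=q)}{\sum_k p(\tilde h_k(X)=q)}-q\big)\sum_k p(\tilde h_k(X)=q)$.

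Finally I would pass the absolute value inside the integral and use $0\le q\le 1$ to drop the leading factor $q$, obtaining the upper bound $\int_0^1 \big|\frac{\sum_k p(Y=k,\tilde h_k(X)=q)}{\sum_k p(\tilde h_k(X)=q)}-q\big|\cdot\sum_k p(\tilde h_k(X)=q)\,dq$, which is exactly $\text{CACE}_\scrD(\tilde h)$ of Definition~\ref{def:deviation}. The proof is thus essentially a direct computation with no deep obstacle; the one place where care is genuinely needed is the factoring step, which must avoid dividing by zero — handled by the stated convention, exactly mirroring how CACE is itself written as a single weighted integral — together with, at a more pedantic level, making the informal notation $p(\tilde h_k(X)=q)$ precise by integrating against the pushforward law of $\tilde h_k(X)$.
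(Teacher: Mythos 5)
Your proposal is correct and takes essentially the same route as the paper's proof: both reduce $\E_{\scrH_\gA}[\disag_\scrD(h,h')]-\E_{\scrH_\gA}[\testerr_\scrD(h)]$ to a signed integral over the confidence value $q$ of the quantity $\sum_k p(Y=k,\tilde h_k(X)=q)-q\sum_k p(\tilde h_k(X)=q)$ times a factor bounded by one, and then bound it by $\text{CACE}_{\scrD}(\tilde h)$ via the triangle inequality. The only cosmetic difference is that the paper reuses the level-set expressions from the proof of Theorem~\ref{thm:agg-main} and discards a leftover factor $(1-q)\le 1$, whereas your direct computation of $\E_{\scrH_\gA}[\disag_\scrD(h,h')]=1-\E_X\bigl[\sum_k\tilde h_k(X)^2\bigr]$ and $\E_{\scrH_\gA}[\testerr_\scrD(h)]=1-\E_{(X,Y)}[\tilde h_Y(X)]$ leaves a factor $q\le 1$ to discard; both are valid and give the same bound.
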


\textbf{Remark.} All  our results hold more generally for any probabilistic classifier $\tilde{h}$. For example, if $\tilde{h}$ was an individual calibrated neural network whose predictions are given by softmax probabilities, then GDE holds for the neural network itself: the disagreement rate between two independently sampled one-hot predictions from that network would equal the test error of the softmax predictions.

\section{Empirical Analysis of Class-aggregated Calibration}

\paragraph{Empirical evidence for theory.}

As stated in the introduction, it is a well-established observation that ensembles of SGD trained models provide good confidence estimates \citep{lakshmi17ensembles}. However, typically the output of these ensembles correspond to the average {\em softmax probabilities} of the individual models, rather than an average of the top-class predictions. Our theory is however based upon the latter type of ensembles. Furthermore, there exists many different evaluation metrics for calibration in literature, while we are particularly interested in the precise definition we have in Definition~\ref{def:agg-calibration}. We report our observations keeping these considerations in mind.

In Figure \ref{fig:calibration-cifar10}, \ref{fig:calibration-cifar10-2k} and \ref{fig:calibration-cifar100}, we show that SGD ensembles \textit{do} nearly satisfy class-aggregated calibration {\em for all the sources of stochasticity we have considered}. In each plot, we report the conditional probability in the L.H.S of Definition~\ref{def:agg-calibration} along the $y$ axis and the confidence value $q$ along the $x$ axis. We observe that the plot closely follows the $x=y$ line. In fact, we observe that calibration holds across different sources of stochasticity. We discuss this aspect in more detail in Appendix~\ref{sec:diff-stoch}.

For a more precise quantification of how well calibration captures GDE, we also look at our notion of calibration error, namely CACE, which also acts as an upper bound on the difference between the test error and the disagreement rate. We report CACE averaged over 100 models in Table \ref{tab: calibration-sample-size} (for CIFAR-10) and Table~\ref{tab: calibration-sample-size-cifar100} (for CIFAR-100). Most importantly, we observe that the CACE across different stochasticity settings correlates with the actual gap between the test error and the disagreement rate. In particular, CACE for \texttt{AllDiff/DiffData} are about 2 to 3 times smaller than that for \texttt{DiffInit/Order}, paralleling the behavior of $|\text{TestErr} - \text{Dis}|$ in these settings.

 \begin{figure}[t]
     \centering
     \begin{subfigure}[t]{0.23\textwidth}
         \centering
         \includegraphics[width=\textwidth]{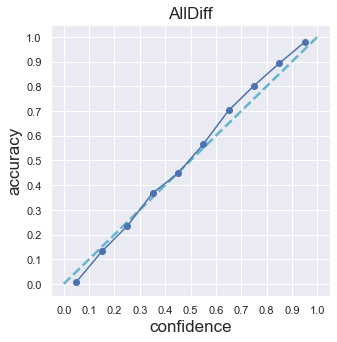}
         \label{fig:cali1}
     \end{subfigure}
         \begin{subfigure}[t]{0.23\textwidth}
         \centering
         \includegraphics[width=\textwidth]{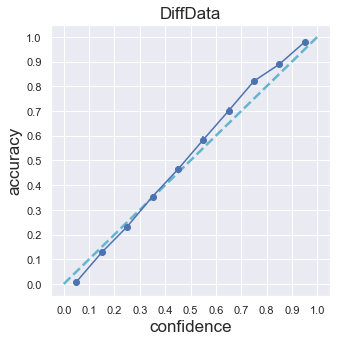}
         \label{fig:cali2}
     \end{subfigure}
     \centering
     \begin{subfigure}[t]{0.23\textwidth}
         \centering
         \includegraphics[width=\textwidth]{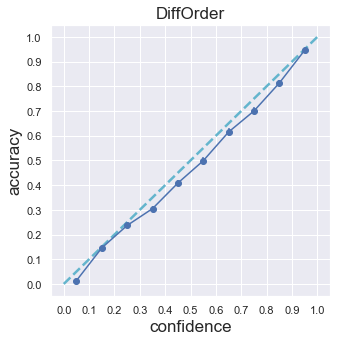}
         \label{fig:cali3}
     \end{subfigure}
 \begin{subfigure}[t]{0.23\textwidth}
         \centering
         \includegraphics[width=\textwidth]{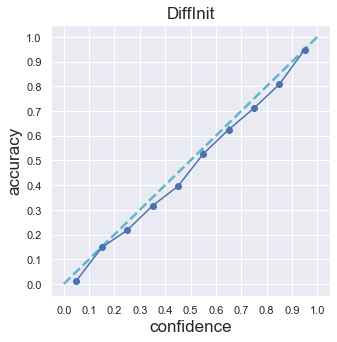}
         \label{fig:cali4}
     \end{subfigure}
     \vspace{-0.2in}
    \caption{\textbf{Calibration on CIFAR10}: Calibration plot of different ensembles of 100 ResNet18 trained on CIFAR10. The error bar represents one bootstrapping standard deviation (most are extremely small). The estimated CACE for each scenario is shown in Table \ref{tab: calibration-sample-size}.}
        \label{fig:calibration-cifar10}
        \bigskip
        \vspace{-4mm}
\end{figure}

\begin{table}[t!]
\small
\centering
\begin{tabular}{c|ccc|ccc|c}
\toprule
       & \textbf{Test Error} & \textbf{Disagreement} & \textbf{Gap}& $\textbf{CACE}^{(\textbf{100})}$ & $\textbf{CACE}^{(\textbf{5})}$ & $\textbf{CACE}^{(\textbf{2})}$  & \textbf{ECE}\\ \hline
\texttt{AllDiff} &  $0.336 \, {\scriptstyle \pm \, 0.015}$ & $0.348 \, {\scriptstyle \pm \, 0.015}$ & 0.012& 0.0437 & 0.2064  & 0.4244  & 0.0197\\
\texttt{DiffData} & $0.341 \, {\scriptstyle \pm \, 0.020}$ & $0.354 \, {\scriptstyle \pm \, 0.020}$ & 0.013& 0.0491 & 0.2242 & 0.4411  & 0.0267\\
\texttt{DiffInit} &  $0.337\, {\scriptstyle \pm \, 0.017}$ & $0.307 \, {\scriptstyle \pm \, 0.022}$ & 0.030 & 0.0979 & 0.2776 & 0.4495  & 0.0360\\  
\texttt{DiffOrder} &  $0.335\, {\scriptstyle \pm \, 0.017}$ & $0.302 \, {\scriptstyle \pm \, 0.020}$ & 0.033&0.1014  & 0.2782 & 0.4594  & 0.0410\\
\bottomrule
\end{tabular}
\caption{\textbf{Calibration error vs. deviation from GDE for CIFAR10:} Estimated CACE for ensembles with different number of models (denoted in the superscript) for ResNet18 on CIFAR10 with 10000 training examples. Test Error, Disagreement statistics and ECE are averaged over 100 models. Here ECE is the standard measure of top-class calibration error, provided for completeness.}
\label{tab: calibration-sample-size}
\vspace{-4mm}
\end{table}

\paragraph{Caveats.} While we believe our work provides a simple theoretical insight into how calibration leads to GDE, there are a few gaps that we do not address. First, we do not provide a theoretical characterization of \textit{when} we can expect good calibration (and hence, when we can expect GDE). Therefore, if we do not know {\em a priori} that calibration holds in a particular setting, we would need labeled test data to verify if CACE is small. This would defeat the purpose of using unlabeled-data-based disagreement to measure the test error. (Thankfully, in in-distribution settings, it seems like we may be able to assume calibration for granted.) Next, our theory sheds insight into why GDE holds in expectation over training stochasticity. However, it is surprising that in practice the disagreement rate (and the test error) for a single pair of models lies close to this expectation. This occurs even though two-model-ensembles are poorly calibrated (see Tables~\ref{tab: calibration-sample-size} and \ref{tab: calibration-sample-size-cifar100}). 
Finally, while CACE is an upper bound on the deviation from GDE, in practice CACE is only a loose bound, which could either indicate a mere lack of data/models or perhaps that our theory can be further refined.

\section{Conclusion}
 Building on \citet{nakkiran20distributional}, we observe that remarkably, two networks trained on the same dataset, tend to disagree with each other on unlabeled data nearly  as much as they disagree with the ground truth.  We've also theoretically shown that this property arises from the fact that SGD ensembles are well-calibrated. Broadly, these findings contribute to the larger pursuit of identifying and understanding empirical phenomena in deep learning.  Future work could shed light on why different sources of stochasticity surprisingly have a similar effect on calibration. It is also important for future work in uncertainty estimation and calibration to develop a precise and exhaustive characterization of when calibration and GDE would hold. On a different note, we hope our work inspires other novel ways to leverage unlabeled data to estimate generalization and also further cross-pollination of ideas between research in generalization and calibration.

 \subsubsection*{Acknowledgments}
The authors would like to thank Andrej Risteski, Preetum Nakkiran and Yamini Bansal for valuable discussions during the course of this work. Yiding Jiang and Vaishnavh Nagarajan were supported by funding from the Bosch Center for Artificial Intelligence.

\bibliography{iclr2022_conference}
\bibliographystyle{iclr2022_conference}

\appendix
\section{Related work}
\label{app:related}

Here we provide a more detailed discussion of some related work. 
First we discuss how our results are distinct from and/or complement  their other relevant findings. First, \citetalias{nakkiran20distributional} provide a proof of GDE specific to 1-Nearest Neighbor models trained on two independent datasets. Our result does not restrict the hypothesis class, the algorithm or its stochasticity. 
Second, \citetalias{nakkiran20distributional} identify a notion they term as ``feature calibration'' which can be thought of as a generalized version of calibration. 
However, the instantiations of feature calibration that they empirically study are significantly different from the standard notion we study. 
Furthermore, they treat GDE and feature calibration as independent phenomena. Conversely, we show that calibration in the standard sense {\em implies} GDE. 
Finally, in their Appendix D.7.1, \citetalias{nakkiran20distributional} do report studies of ensembles where the members are trained on the same data. But this is in an altogether independent context ---
 the GDE-related experiments in \citetalias{nakkiran20distributional} are all reported only on ensembles of members trained on different data. Hence, overall, their empirical results do not imply our GDE results. %

In a concurrent work, \cite{chen2021detecting} make similar discoveries regarding estimating accuracy via agreement  by developing a sophisticated ensemble-learning algorithm and make connections to calibration. Overall, their focus is more algorithmic, while our focus is primarily on understanding the nature of this phenomenon. Furthermore, in comparison to all these works which focus on out-of-distribution settings, since we focus on the in-distribution setting, we are able to identify a much simpler approach that works with {\em vanilla-trained blackbox} models e.g., we examine the effects of different kinds of stochasticity, we introduce ensembles only as a vehicle to understand the phenomenon theoretically (rather than an algorithmic object), and we prove how GDE holds under certain novel notions of calibration weaker than the existing ones.

\section{Appendix: Additional theoretical discussion}
\subsection{Proof of Theorem~\ref{thm:agg-main}}
\label{sec:proof}

We will now prove Theorem~\ref{thm:agg-main} which states that if the ensemble $\tilde{h}$ satisfies class-aggregated calibration, then the expected test error equals the expected disagreement rate.

The basic intuition behind why class-aggregated calibration implies GDE is similar to the argument for class-wise calibration --- we can similarly argue that within a confidence level set, the distribution over the predicted classes matches the distribution over the ground truth labels; therefore, measuring disagreement of the ensemble against the ensemble itself boils down to measuring disagreement against the ground truth. However, the confidence level sets here can involve counting the same data point multiple times, and this nuance needs to be handled carefully as can be seen from the proof in the appendix.

\begin{theorem}
\label{thm:agg-main}
Given a stochastic learning algorithm $\gA$, if its corresponding ensemble $\tildeh$ satisfies class-aggregated calibration (Definition \ref{def:agg-calibration}) on $\scrD$, then $\gA$ satisfies GDE on $\scrD$ (Definition \ref{def:disagreement-property}).
\end{theorem}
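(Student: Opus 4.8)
The plan is to lift the binary-classification argument from the sketch of Theorem~\ref{thm:main} to the multi-class setting, trading the point-wise level-set reasoning for an averaging over confidence values that is compatible with the summed-over-classes form of the calibration hypothesis.

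First I would reduce GDE to a single moment identity. Swapping the expectation over $\scrH_\gA$ with the expectation over $\scrD$ (valid by Fubini, since the indicators are bounded) and using $\E_{h\sim\scrH_\gA}[\mathbbm 1[h(x)=k]]=\tilde h_k(x)$ together with the independence of $h,h'$, one gets $\E_{h,h'}[\disag_\scrD(h,h')]=\E_X[1-\sum_k\tilde h_k(X)^2]$ and $\E_h[\testerr_\scrD(h)]=\E_{X,Y}[1-\tilde h_Y(X)]$. Cancelling the $1$'s and using $\tilde h_Y(X)=\sum_k\tilde h_k(X)\,\mathbbm 1[Y=k]$, GDE becomes equivalent to
\[
\textstyle\sum_{k}\E_X[\tilde h_k(X)^2]\;=\;\sum_{k}\E_{X,Y}[\tilde h_k(X)\,\mathbbm 1[Y=k]].
\]

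Next I would disintegrate each side over the value of the confidence. For each class $k$, let $\mu_k$ be the law of $\tilde h_k(X)$ on $[0,1]$ (the pushforward of the $X$-marginal of $\scrD$ under $x\mapsto\tilde h_k(x)$). Conditioning on $\tilde h_k(X)$ and using the tower rule (here $\tilde h_k(X)$ is a deterministic function of $X$, so $\Pr[Y=k\mid X]$ averages to $\Pr[Y=k\mid\tilde h_k(X)=q]$), the left side above equals $\sum_k\int_0^1 q^2\,d\mu_k(q)$ and the right side equals $\sum_k\int_0^1 q\,\Pr[Y=k\mid\tilde h_k(X)=q]\,d\mu_k(q)$. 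Now the key move: Definition~\ref{def:agg-calibration} is exactly the statement that, as measures in $q$, $\sum_k\Pr[Y=k\mid\tilde h_k(X)=q]\,d\mu_k(q)=q\sum_k d\mu_k(q)$ — the numerator and denominator there being the $q$-densities of these two measures. Multiplying this measure identity by the bounded function $q\mapsto q$ and integrating over $[0,1]$ produces precisely the displayed identity, closing the proof. Theorem~\ref{thm:main} then drops out as a corollary, since class-wise calibration forces $\Pr[Y=k\mid\tilde h_k(X)=q]=q$ for every $k$, which trivially implies its aggregated version.

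The main obstacle is the one the paper itself flags: unlike in the binary case, the ``confidence level sets'' $\{x:\tilde h_k(x)=q\}$ for different $k$ can overlap, so $\gX$ cannot be cleanly partitioned into level sets on each of which disagreement would equal error. The fix is to avoid partitioning $\gX$ altogether and instead manipulate the family of pushforward measures $\{\mu_k\}_k$ on $[0,1]$, invoking calibration only in its aggregated (summed-over-$k$) form; the weighting by $p(\tilde h_k(X)=q)$ built into Definition~\ref{def:agg-calibration} is exactly what makes the sum over $k$ recombine correctly even though a given point may be counted in several of the $\mu_k$. A secondary, purely technical point is measure-theoretic hygiene: since $\tilde h_k(X)$ may have atoms as well as a continuous part, Steps are cleanest phrased via disintegration of measures rather than via densities, although the density notation $p(\cdot)$ used in the paper can be retained informally.
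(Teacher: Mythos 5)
Your proposal is correct and follows essentially the same route as the paper's proof: both sides are disintegrated over the confidence value $q$ of each class (the paper via introducing $\tilde h(X)$ as a simplex-valued random variable and marginalizing, you via the per-class pushforward measures $\mu_k$), and class-aggregated calibration is invoked as exactly the identity that equates the two resulting integrals. Your reformulation as the second-moment identity $\sum_k \E[\tilde h_k(X)^2]=\sum_k\E[\tilde h_k(X)\mathbbm{1}[Y=k]]$ is a clean cosmetic streamlining of the same argument, not a different method.
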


\begin{proof}
Before we delve into the details of the proof, we will outline the high-level proof idea which is similar to that proof sketch presented in Section \ref{sec:cal_imp_gde}. First, we express the expected test error in terms of an integral over the confidence values (Eq \ref{eq:test-error-pre-assumption}) and then plug in the definition for class-aggregated calibration to get Eq \ref{eq:test-error-simplified}. For expected disagreement rate, we can analogously express it as an integral over the confidence values, because the models are independent and the expectation naturally produces the confidence values. Note that the calibration assumption is not used for deriving the expected disagreement rate and the final result (Eq \ref{eq:dis-simplified}) is equal to the expected test error (Eq \ref{eq:test-error-simplified}), which completes our proof.

We'll first simplify the expected test error and then proceed to simplifying the expected disagreement rate to the same quantity. %

\paragraph{Test Error} Recall that the expected test error (which we will denote as $\texttt{ETE}$ for short) corresponds to $\E_{\scrH_\gA}\left[p(h(X) \ne Y \mid h)\right]$.

\begin{align}
    \texttt{ETE}&\triangleq  \E_{ h \sim \scrH_\gA}\left[p(h(X) \ne Y \mid h)\right] \\
    &= \E_{h \sim \scrH_\gA}\left[\E_{(X,Y) \sim \scrD}\left[\mathbbm{1}[h(X) \ne Y]\right]\right]     \\
    &= \E_{(X,Y) \sim \scrD}\left[\E_{ \scrH_\gA}\left[\mathbbm{1}[ h(X) \ne Y]\right]\right] && \text{(exchanging expectations by Fubini's theorem)}\\
    &= \E_{(X,Y) \sim \scrD}\left[1-\tildeh_Y(X)\right].
\end{align}
For our further simplifications, we'll explicitly deal with integrals rather than expectations, so we get,
\begin{align}
    \texttt{ETE}& =  \sum_{k=0}^{K-1}\int_{x} (1-\tildeh_k(x)) p(X = x, Y = k)dx. \\
\intertext{We'll also introduce $\tilde{h}(X)$ as a r.v. as,}
    \texttt{ETE}& =  \int_{\vq \in \Delta^K}\sum_{k=0}^{K-1}\int_{x} (1-\tildeh_k(x)) p(X = x, Y = k, \tilde{h}(X) = \vq)dx d\vq. \\
\intertext{Over the next few steps, we'll get rid of the integral over $x$. First, splitting the joint distribution over the three r.v.s by conditioning on the latter two,}
    \texttt{ETE} & = \int_{\vq \in \Delta^K} \sum_{k=0}^{K-1} p(Y=k, \tilde{h}(X)=\vq) \int_{x}  (1-\underbrace{\tildeh_k(x)}_{=q_k}) p(X=x \mid Y=k , \tilde{h}(X)=\vq) dx d\vq \\
     & = \int_{\vq\in \Delta^K} \sum_{k=0}^{K-1}  p(Y=k, \tildeh(X) = \vq) \int_{x} \underbrace{(1-q_k)}_{\text{constant w.r.t }\int_x} p(X = x \mid \tildeh(X) = \vq, Y=k)dx d\vq\\
     & = {\int_{\vq\in \Delta^K} \sum_{k=0}^{K-1}}  p(Y=k, \tildeh(X) = \vq) (1-q_k) 
\underbrace{\int_{x}  p(X = x \mid \tildeh(X) = \vq, Y=k)dx}_{=1} d\vq \\
 & = \underbrace{\int_{\vq\in \Delta^K} \sum_{k=0}^{K-1}}_{\text{swap}}  p(Y=k, \tildeh(X) = \vq) (1-q_k) d\vq.  \\
 & =  \sum_{k=0}^{K-1} \int_{\vq\in \Delta^K}  p(Y=k, \tildeh(X) = \vq) (1-q_k) d\vq. \\
\end{align}
In the next few steps, we'll simplify the integral over $\vq$ by marginalizing over all but the $k$th dimension. First, we rewrite the joint distribution of $\tilde{h}(X)$ in terms of its $K$ components. For any $k$, let $\tilde{h}_{-k}(X)$ and $\vq_{-k}$ denote the $K-1$ dimensions of both vectors excluding their $k$th dimension. Then,
    \begin{align}
\texttt{ETE} &= \sum_{k=0}^{K-1}\int_{q_k} \int_{\vq_{-k}}  p(\tildeh_{-k}(X) = \vq_{-k} \mid Y=k, \tildeh_k(X) = q_k) \underbrace{p(Y=k, \tildeh_k(X) = q_k) (1-q_k)}_{\text{constant w.r.t } \int_{\vq_{-k}}} d\vq_{-k} dq_k \\
&= \sum_{k=0}^{K-1}\int_{q_k} p(Y=k, \tildeh_k(X) = q_k) (1-q_k) \underbrace{\int_{\vq_{-k}}  p(\tildeh_{-k}(X) = \vq_{-k} \mid Y=k, \tildeh_k(X) = q_k)  d\vq_{-k}}_{=1} dq_k \\
& =  \sum_{k=0}^{K-1}\int_{q_k} p(Y=k, \tildeh_k(X) = q_k) (1-q_k) dq_k. \\
  \intertext{Rewriting $q_k$ as just $q$,}
  \texttt{ETE} & =  \underbrace{\sum_{k=0}^{K-1} \int_{q \in [0,1]}}_{\text{swap}}  p(Y=k, \tildeh_k(X) = q) (1-q)  dq \\
  &=  \int_{q \in [0,1]}  \sum_{k=0}^{K-1} p(Y=k, \tildeh_k(X) = q) (1-q)  dq. \label{eq:test-error-pre-assumption}\\
  \intertext{Finally, we have from the calibration in aggregate assumption that $\sum_{k=0}^{K-1} p(Y=k, \tildeh_k(X) = q) = q \sum_{k=0}^{K-1} p(\tildeh_k(X) = q)$ (Definition \ref{def:agg-calibration}). So, applying this, we get}
  &=  \int_{q \in [0,1]}  q \sum_{k=0}^{K-1} p(\tildeh_k(X) = q)  (1-q)  dq. \\
  \intertext{Rearranging,}
 \texttt{ETE} &=  \int_{q \in [0,1]}  q (1-q) \sum_{k=0}^{K-1} p(\tildeh_k(X) = q)    dq.  
    \label{eq:test-error-simplified}
\end{align}

\paragraph{Disagreement Rate} The expected disagreement rate (denoted by $\texttt{EDR}$ in short) is given by the probability that two i.i.d samples from $\tildeh$ disagree with each other over draws of input from $\scrD$, taken in expectation over draws from $\scrH_\gA$. That is,
\begin{align}
   \texttt{EDR} &\triangleq \E_{h, h' \sim \scrH_\gA}\left[p(h(X) \neq h'(X) \mid h, h')\right] \\
   &= \E_{h, h' \sim \scrH_\gA}\left[\E_{(X,Y) \sim \scrD}\left[\mathbbm{1}[h(X) \neq h'(X)]\right]\right] \\
   &= \E_{(X,Y) \sim \scrD}\left[\E_{h, h' \sim \scrH_\gA}\left[\mathbbm{1}[h(X) \neq h'(X)]\right]\right] && \text{(exchanging expectations by Fubini's Theorem)}
\end{align}
Over the next few steps, we'll write this in terms of $\tilde{h}$ rather than $h$ and $h'$.
\begin{align}
\texttt{EDR}    = & \E_{(X,Y) \sim \scrD}\left[\E_{h, h' \sim \scrH_\gA}\left[ \sum_{k=0}^{K-1}\mathbbm{1}[h(X) = k] \left( 1-  \mathbbm{1}[h'(X) = k]\right)\right]\right]  \\
    = & \E_{(X,Y) \sim \scrD}\left[\sum_{k=0}^{K-1}\E_{h, h' \sim \scrH_\gA}\left[\mathbbm{1}[h(X) = k] \left( 1-  \mathbbm{1}[h'(X) = k]\right)\right]\right] && \text{(swapping the expectation and the summation)}\\
     = & \E_{(X,Y) \sim \scrD}\left[\sum_{k=0}^{K-1}{p(h(X) = k \mid X) \left( 1-  p(h'(X) = k \mid X)\right)}\right] && \text{(since $h$ and $h'$ are i.i.d samples from $\scrH_\gA$)}\\
    = & \E_{(X,Y) \sim \scrD}\left[\sum_{k=0}^{K-1}\tildeh_k(X)(1-\tildeh_k(X))\right].
    \end{align}

From here, we'll deal with integrals instead of expectations.
\begin{align}
\texttt{EDR} = & \int_x \sum_{k=0}^{K-1}\tildeh_k(x)(1-\tildeh_k(x)) p(X=x) dx. \\
 \intertext{Let us introduce the random variable $\tilde{h}(X)$ as,}
 \texttt{EDR} = & \int_{\vq \in \Delta^K} \int_x \sum_{k=0}^{K-1}\tildeh_k(x)(1-\tildeh_k(x)) p\left(X=x, \tilde{h}(X) = \vq\right) dx d\vq.
 \end{align}
 
In the next few steps, we'll get rid of the integral over $x$. First, we split the joint distribution as, 
 \begin{align}
\texttt{EDR} = & \int_{\vq \in \Delta^K} p\left(\tilde{h}(X) = \vq\right) \int_x \sum_{k=0}^{K-1}\underbrace{\tildeh_k(x)(1-\tildeh_k(x))}_{\text{apply } \tilde{h}_k(x)=q_k} p\left(X=x \mid \tilde{h}(X) = \vq\right) dx d\vq. \\
 = & \int_{\vq \in \Delta^K} p\left(\tilde{h}(X) = \vq\right) \int_x \underbrace{\sum_{k=0}^{K-1}}_{\text{bring to the front}} q_k(1-q_k) p\left(X=x \mid \tilde{h}(X) = \vq\right) dx d\vq. \\
 = & \sum_{k=0}^{K-1} \int_{\vq \in \Delta^K} p\left(\tilde{h}(X) = \vq\right) \int_x  \underbrace{q_k(1-q_k)}_{\text{constant w.r.t }\int_x} p\left(X=x \mid \tilde{h}(X) = \vq\right) dx d\vq. \\
  = & \sum_{k=0}^{K-1} \int_{\vq \in \Delta^K} p\left(\tilde{h}(X) = \vq\right)  q_k(1-q_k)\underbrace{ \int_x  p\left(X=x \mid \tilde{h}(X) = \vq\right) dx}_{1} d\vq. \\
    = & \sum_{k=0}^{K-1} \int_{\vq \in \Delta^K} p\left(\tilde{h}(X) = \vq\right)  q_k(1-q_k) d\vq.
    \end{align}
    
Next, we'll simplify the integral over $\vq$ by marginalizing over all but the $k$th dimension.
\begin{align}
 \texttt{EDR}= & \sum_{k=0}^{K-1} \int_{q_k}\int_{\vq_{-k}}  p(\tilde{h}_{-k}(X) = \vq_{-k} \mid \tilde{h}_k(X) = q_k)  \underbrace{p(\tilde{h}_k(X) = q_k)q_k(1-q_k)}_{\text{constant w.r.t. }\int_{\vq_{-k}}} d\vq_{-k} dq_{k} \\
 = & \sum_{k=0}^{K-1} \int_{q_k} p(\tilde{h}_k(X) = q_k)q_k(1-q_k) \underbrace{\int_{\vq_{-k}}  p(\tilde{h}_{-k}(X) = \vq_{-k} \mid \tilde{h}_k(X) = q_k)  d\vq_{-k}}_{=1} dq_{k} \\
 =& \sum_{k=0}^{K-1} \int_{q_k} p\left(\tilde{h}_k(X) = q_k\right)q_k(1-q_k) dq_{k}. \\
\intertext{Rewriting $q_k$ as just $q$,}
 \texttt{EDR} = & \underbrace{\sum_{k=0}^{K-1} \int_{q \in [0,1]}}_{\text{swap}} p\left(\tilde{h}_k(X) = q\right) q(1-q) dq \\
  = & \int_{q \in [0,1]} q(1-q) \sum_{k=0}^{K-1}  p\left(\tilde{h}_k(X) = q\right)  dq. \label{eq:dis-simplified}
\end{align}

This is indeed the same term as Eq~\ref{eq:test-error-simplified}, thus completing the proof.

\end{proof}

\subsection{Proof of Theorem~\ref{thm:main} and Variance of Disagreement}
\label{sec:main_corrol}
Since Theorem~\ref{thm:main} is a special case of Theorem~\ref{thm:agg-main}, we can easily prove the former:
\begin{proof}
Observe that if $\tilde{h}$ satisfies the class-wise calibration condition as in Definition~\ref{def:calibration}, it must also satisfy class-aggregated calibration. 
\begin{align}
     &p \left( Y = k \mid \tildeh_k(X) = q  \right) = q \,\,\, \forall k \\
     \Longrightarrow&\, \frac{\sum_{i=1}^{K-1} p ( Y = k, \tildeh_k(X) = q )}{\sum_{i=1}^{K-1} p(\tildeh_k(X) = q)} =  \frac{\sum_{i=1}^{K-1} p \left( Y = k \mid \tildeh_k(X) = q  \right) p(\tildeh_k(X) = q)}{\sum_{i=1}^{K-1} p(\tildeh_k(X) = q)} = q
\end{align}
Then, we can invoke Theorem~\ref{thm:agg-main} to claim that GDE holds.
\end{proof}

The theorem above shows that in expectation, the disagreement of independent pairs of hypotheses is equal to the test error of a single hypothesis. Now, we will drive an upper bound on the variance of the distribution. This result corroborates the empirical observation where we can use as low as a single pair of independent models to accurate estimate the test error.

\begin{corollary}
\label{col:variance}
If GDE holds and there exists $\kappa \in \left[\frac{1}{2}, 1\right]$ such that, for all $h, h'$ in the support of $\scrH_\gA$, $$\disag(h, h') \leq \kappa \left( \testerr(h) + \testerr(h') \right),$$
then
$$
 \text{Var}_{h, h' \sim \scrH_\gA}\left( \disag(h, h') \right) \leq 2\kappa^2 \text{Var}_{h \sim \scrH_\gA}\left( \testerr(h)\right) + \left( 4\kappa^2 - 1 \right) \texttt{ETE}^2 .
$$
\end{corollary}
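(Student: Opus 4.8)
The plan is to bound the variance of $\disag(h,h')$ by relating it to the variance of $\testerr(h)$ via the pointwise inequality $\disag(h,h') \le \kappa(\testerr(h)+\testerr(h'))$, and then using GDE to replace the awkward cross terms by $\texttt{ETE}$. First I would write $\Var(\disag) = \E[\disag(h,h')^2] - (\E[\disag(h,h')])^2$. By GDE, $\E_{h,h'}[\disag(h,h')] = \E_h[\testerr(h)] = \texttt{ETE}$, so the subtracted term is exactly $\texttt{ETE}^2$; this is why the statement has a $-\,\texttt{ETE}^2$ (packaged inside $(4\kappa^2-1)\texttt{ETE}^2$ after we account for the $+4\kappa^2\texttt{ETE}^2$ that will emerge from the first term). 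For the second moment, use the hypothesis (note $\disag \ge 0$ and the right-hand side is nonnegative, so squaring preserves the inequality) to get
\[
\E_{h,h'}\!\left[\disag(h,h')^2\right] \le \kappa^2\, \E_{h,h'}\!\left[\left(\testerr(h)+\testerr(h')\right)^2\right].
\]

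Next I would expand the square on the right: since $h$ and $h'$ are i.i.d.\ draws from $\scrH_\gA$,
\[
\E_{h,h'}\!\left[\left(\testerr(h)+\testerr(h')\right)^2\right] = 2\,\E_h\!\left[\testerr(h)^2\right] + 2\,\E_h[\testerr(h)]\,\E_{h'}[\testerr(h')] = 2\,\E_h\!\left[\testerr(h)^2\right] + 2\,\texttt{ETE}^2.
\]
Now substitute $\E_h[\testerr(h)^2] = \Var_h(\testerr(h)) + \texttt{ETE}^2$, giving the right-hand side as $2\kappa^2\Var_h(\testerr(h)) + 4\kappa^2\texttt{ETE}^2$. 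Therefore
\[
\Var_{h,h'}(\disag(h,h')) \le 2\kappa^2\Var_h(\testerr(h)) + 4\kappa^2\,\texttt{ETE}^2 - \texttt{ETE}^2 = 2\kappa^2\Var_h(\testerr(h)) + (4\kappa^2-1)\texttt{ETE}^2,
\]
which is exactly the claimed bound.

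The computation is entirely routine once the two ingredients — (i) squaring the hypothesis inequality and (ii) invoking GDE to evaluate $\E[\disag]$ — are in place, so there is no real obstacle; the only things to be careful about are that squaring both sides of $0 \le \disag \le \kappa(\testerr(h)+\testerr(h'))$ is legitimate because all quantities are nonnegative (here the condition $\kappa \ge \tfrac12$ is not even needed for the variance bound itself, though it is the natural range since the triangle inequality always gives $\disag \le \testerr(h)+\testerr(h')$), and that the independence of $h,h'$ is used to split $\E[\testerr(h)\testerr(h')]$ into $\texttt{ETE}^2$. I would also remark that $\kappa < 1$ yields a strict improvement over the trivial bound obtained from $\kappa=1$, which is the point of the corollary.
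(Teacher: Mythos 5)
Your proposal is correct and follows essentially the same route as the paper's proof: square the hypothesis inequality, expand using the i.i.d.\ structure of $h,h'$ to split the cross term into $\texttt{ETE}^2$, rewrite the second moment via the variance, and invoke GDE to identify $\texttt{EDR}$ with $\texttt{ETE}$ in the subtracted term. Your added remarks (nonnegativity justifying the squaring, and that $\kappa \ge \tfrac12$ is not needed for the bound itself) are minor clarifications of the same argument.
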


\begin{proof}
First we write the expression for the exact variance of the disagreement:
\begin{align}
    \text{Var}_{h, h' \sim \scrH_\gA}\left( \disag(h, h') \right) = \E_{h, h'\sim \scrH_\gA}\left[\disag(h, h')^2\right] - \texttt{EDR}^2. \label{exact_var}
\end{align}
By the assumption:
\begin{align}
    &\E_{h, h'\sim \scrH_\gA}\left[\disag(h, h')^2\right]\\ \leq\,\,& \kappa^2 \E_{h, h'\sim \scrH_\gA}\left[ \left(\testerr(h) + \testerr(h')\right)^2\right] \\
    \leq\,\,& \kappa^2 \E_{h, h'\sim \scrH_\gA}\left[ \testerr(h)^2 + 2\testerr(h)\testerr(h') + \testerr(h')^2\right].
\end{align}
Since $h$ and $h'$ are independent and identically distributed, the expectation of the cross term is equal to the product of each expectation and the second moments are equal:
\begin{align}
    \E_{h, h'\sim \scrH_\gA}\left[\disag(h, h')^2\right] &\leq \kappa^2 \left( 2 \E_{h\sim \scrH_\gA}\left[ \testerr(h)^2\right] + 2 \texttt{ETE}^2\right) \\
    &= \kappa^2 \left( 2 \E_{h\sim \scrH_\gA}\left[ \testerr(h)^2\right] - 2 \texttt{ETE}^2 + 2 \texttt{ETE}^2 + 2 \texttt{ETE}^2\right) \\
    &= \kappa^2 \left( 2 \text{Var}_{h \sim \scrH}\left( \testerr(h)\right) + 4 \texttt{ETE}^2\right).
\end{align}
Substituting the inequality back to (\ref{exact_var}):
\begin{align}
    \text{Var}_{h, h' \sim \scrH_\gA}\left( \disag(h, h') \right) \leq \kappa^2 \left( 2 \text{Var}_{h \sim \scrH}\left( \testerr(h)\right) + 4 \texttt{ETE}^2\right) - \texttt{EDR}^2. 
\end{align}
By the GDE, we know that $\texttt{ETE} = \texttt{EDR}$:
\begin{align}
    \text{Var}_{h, h' \sim \scrH_\gA}\left( \disag(h, h') \right) \leq 2\kappa^2 \text{Var}_{h \sim \scrH}\left( \testerr(h)\right) + \left( 4\kappa^2 - 1 \right) \texttt{ETE}^2 . \label{final_ineq}
\end{align}
\end{proof}

\textbf{Remark.} This corollary characterizes the worst-case behavior of the disagreement error's variance and relates it to the expectation and variance of the test error distribution over $\scrH_\gA$. Recent works \citep{neal2018modern, nakkiran20deep} have shown that the classical bias-variance trade-off exhibits unusual behaviors in the overparameterized regime where the model contains much more parameters than the number of data points. In particular, \cite{neal2018modern} shows that the variance actually decreases as the number of parameters increases, which implies that the first term of the RHS in (\ref{final_ineq}) is negligible. Empirically, this is supported by the first columns of Table \ref{tab: calibration-sample-size} and Table \ref{tab: calibration-sample-size-cifar100}, where we show the variance of the test error is small.

$\kappa$ represents the amount of \textit{structure} present in $\scrH_\gA$. If $\kappa = 1$, the assumption $\disag(h, h') \leq  \testerr(h) + \testerr(h')$ is always true, since it follows directly from the triangle inequality; however, this would imply that there exists a pair of hypotheses in the support of $\scrH_\gA$ that achieve the largest possible disagreement rate. Empirical evidence \citep{nakkiran20distributional} indicates that this may not the case. Instead, deep models make mistakes in highly structured manner, suggesting $\kappa$ is much smaller than $1$ in practice. On the other hand, if $\kappa = \frac{1}{2}$, the GDE holds pointwise for every hypotheses pair (up to the small stochasticity present in the distribution of $\texttt{TestErr}$). This is also unlikely since it would imply the disagreement rate variance is \textit{at most} only a half of test error variance. In practice, Table \ref{tab: calibration-sample-size} and \ref{tab: calibration-sample-size-cifar100} show that the variance of disagreement is approximately equal to the variance of the test error, suggesting that $\kappa$ falls somewhere between $\frac{1}{2}$ and $1$. This supports the hypothesis that the models make errors in a structured manner which gives rise to the observed phenomena.

\subsection{Disagreement Property under Deviation from Calibration}
\label{sec:deviation}

Recall from the main paper that we quantified deviation from class-aggregated calibration in terms of CACE. Below, we provide the proof of Theorem~\ref{thm:deviation}, which shows that GDE holds approximately when CACE is low. 

\begin{proof}
Recall from the proof of Theorem~\ref{thm:agg-main} that the expected test error (\texttt{ETE}) satisfies:
\begin{align*}
    \texttt{ETE} &=  \int_{q \in [0,1]}    \underbrace{\sum_{k=0}^{K-1} p(Y=k, \tildeh_k(X) = q)}_{\text{subtract and add a $q \sum_{k=0}^{K-1} p( \tildeh_k(X) = q)$}}  (1-q)  dq \\
     &=  \int_{q \in [0,1]}    \left(\sum_{k=0}^{K-1} {p(Y=k, \tildeh_k(X) = q) -q \sum_{k=0}^{K-1} p( \tildeh_k(X) = q)}\right)  (1-q)  dq \\
     & \;\;\; + \int_{q \in [0,1]}    q \sum_{k=0}^{K-1} p( \tildeh_k(X) = q) (1-q)  dq. \\
\intertext{Recall that the second term on R.H.S is equal to the expected disagreement rate \texttt{EDR}. Therefore,}
\left|\texttt{ETE} - \texttt{EDR}\right| &=  \int_{q \in [0,1]}    \left(\sum_{k=0}^{K-1} {p(Y=k, \tildeh_k(X) = q) -q \sum_{k=0}^{K-1} p( \tildeh_k(X) = q)}\right)  (1-q)  dq. \\
\intertext{Multiplying and dividing the inner term by $ \sum_{k=0}^{K-1} p( \tildeh_k(X) = q)$,}
\left|\texttt{ETE} - \texttt{EDR}\right| &=  \left|\int_{q \in [0,1]}    \left(\frac{\sum_{k=0}^{K-1} {p(Y=k, \tildeh_k(X) = q)}}{\sum_{k=0}^{K-1} p( \tildeh_k(X) = q)} -q\right) \sum_{k=0}^{K-1} p( \tildeh_k(X) = q) (1-q)  dq \right|\\
 &\leq  \int_{q \in [0,1]}    \left|\frac{\sum_{k=0}^{K-1} {p(Y=k, \tildeh_k(X) = q)}}{\sum_{k=0}^{K-1} p( \tildeh_k(X) = q)} -q\right| \sum_{k=0}^{K-1} p( \tildeh_k(X) = q) \underbrace{(1-q)}_{\leq 1}  dq\\
 &\leq  \int_{q \in [0,1]}    \left|\frac{\sum_{k=0}^{K-1} {p(Y=k, \tildeh_k(X) = q)}}{\sum_{k=0}^{K-1} p( \tildeh_k(X) = q)} -q\right| \sum_{k=0}^{K-1} p( \tildeh_k(X) = q)  dq\\
& = \text{CACE}(\tilde{h}).
\end{align*}

\end{proof}

Note that it is possible to consider a more refined definition of CACE that yields a tighter bound on the gap.  In particular, in the last series of equations, we can leave the $1-q$ as it is, without upper bounding by $1$. In practice, this tightens CACE by upto a value of $2$. We however avoid considering the refined definition as it is less intuitive as an error metric.

\subsection{Calibration is not a necessary condition for GDE}
\label{sec:not-necessary}

Theorem \ref{thm:agg-main} shows that calibration implies GDE. Below, we show that the converse is not true. That is, if the ensemble satisfies GDE, it is not necessarily the case that it satisfies class-aggregated calibration. This means that calibration and GDE are not equivalent phenomena, but rather only that calibration may lead to the latter.

\begin{proposition}
For a stochastic algorithm $\gA$ to satisfy GDE, it is not necessary that its corresponding ensemble $\tilde{h}$ satisfies class-aggregated calibration.
\end{proposition}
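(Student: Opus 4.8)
The plan is to exhibit a single explicit counterexample: a data distribution $\scrD$ over $\gX \times [K]$ and a stochastic algorithm $\gA$ (equivalently, a distribution $\scrH_\gA$ over hypotheses, since the theory never uses anything about $\gA$ beyond the induced ensemble $\tilde h$) such that $\gA$ satisfies GDE on $\scrD$ but $\tilde h$ is not class-aggregated calibrated on $\scrD$. The cleanest setting is binary classification ($K=2$) with a finite domain, say just two points $x_1, x_2$, each with its own ground-truth label, where I can hand-pick the ensemble confidences $\tilde h_0(x_1), \tilde h_0(x_2)$ and the marginal weights $p(X=x_1), p(X=x_2)$.

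First I would write out both quantities in closed form for such a discrete instance. By the computation in the proof of Theorem~\ref{thm:agg-main}, on each point $x$ with confidence $q = \tilde h_0(x)$ the expected disagreement contribution is $2q(1-q)$, while the expected test error contribution is $(1-q)$ if the true label at $x$ is $0$ and $q$ if the true label is $1$. So $\texttt{EDR} = \sum_x p(X=x)\cdot 2q_x(1-q_x)$ and $\texttt{ETE} = \sum_x p(X=x)\cdot r_x$ where $r_x \in \{q_x, 1-q_x\}$ depending on the true label. GDE is the single scalar equation $\texttt{EDR} = \texttt{ETE}$, whereas class-aggregated calibration (Definition~\ref{def:agg-calibration}) is a family of equations, one per confidence value $q$ in the support. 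The strategy is to arrange a point with confidence $q$ violating the per-level-set calibration equation, and then a second point whose contribution re-balances the global sum so that the two aggregate scalars still coincide.

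Concretely I would pick, e.g., $x_1$ with true label $0$ and $\tilde h_0(x_1) = q_1$, and $x_2$ with true label $1$ and $\tilde h_0(x_2) = q_2$, choosing $q_1, q_2, p(X=x_1), p(X=x_2)$ so that at the level set $\{q_1\}$ the calibration identity fails (this only needs $p(Y=0 \mid \tilde h_0(X) = q_1) \neq q_1$, which is immediate once $q_1 \neq 1$ since that point is labeled $0$), while simultaneously $p_1\cdot 2q_1(1-q_1) + p_2\cdot 2q_2(1-q_2) = p_1(1-q_1) + p_2\, q_2$. The latter is one linear constraint on four free parameters, so a solution with all quantities in $(0,1)$ and $p_1 + p_2 = 1$ clearly exists; I would just display one such choice (for instance something symmetric like $q_1 = q_2$ with appropriately chosen weights, or a fully explicit numerical tuple) and verify the two identities by direct substitution, then note that calibration visibly fails at the chosen level set. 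I do not expect any real obstacle here — the only mild care needed is to make sure the confidences are genuinely attainable as ensemble outputs (e.g.\ realize $\tilde h_0(x_i) = q_i$ by having $\scrH_\gA$ be a suitable mixture of deterministic hypotheses), and to pick the support of $\tilde h_0$ so that "the level set $\{q_1\}$" is a well-defined event with positive probability; the rebalancing step is then just arithmetic.
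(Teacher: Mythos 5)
Your high-level route is the same one the paper takes: work in binary classification, note that GDE is a single scalar identity while class-aggregated calibration is one identity per attained confidence level, and exhibit a two-level instance where the scalar identity holds but some level-set identity fails. The paper parametrizes differently --- it fixes the confidence profile ($\tilde h_0 \in \{0.1,0.2\}$ with equal mass) and varies the conditional label probabilities $\epsilon_i = p(Y=0 \mid \tilde h_0(X)=q_i)$, observing that the GDE constraint $0.8\epsilon_1 + 0.6\epsilon_2 = 0.2$ has many uncalibrated solutions --- whereas you fix deterministic labels and tune $(q_1,q_2,p_1,p_2)$. That is a perfectly viable variant, but two of your claims need repair.

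First, the ``symmetric'' instantiation $q_1=q_2=q$ that you float does \emph{not} give a counterexample: with a single common level set, the GDE constraint $2q(1-q) = p_1(1-q)+p_2 q$ forces $p_1=q$ (for $q\neq 1/2$; for $q=1/2$ both GDE and calibration hold automatically), and then the class-aggregated ratio at level $q$ is exactly $p_1/1=q$ and at level $1-q$ is $p_2/1=1-q$, i.e.\ the ensemble is calibrated. Your claim that calibration failure is ``immediate since that point is labeled $0$'' presupposes the level set at $q_1$ is $\{x_1\}$ alone, which requires $q_1\neq q_2$ (and, for class-aggregated calibration, that you also account for class-$1$ confidences landing on $q_1$). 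Second, ``a solution with all quantities in $(0,1)$ clearly exists'' is not automatic: the per-point gaps between error and disagreement are $(1-q_1)(1-2q_1)$ at $x_1$ and $q_2(2q_2-1)$ at $x_2$, so a positive convex combination can vanish only if these have opposite signs --- e.g.\ $q_1=0.9$, $q_2=0.2$ admits no valid weights. Both issues are fixed by a concrete tuple such as $q_1=0.4$, $q_2=0.2$, $p_1=p_2=1/2$: the expected error is $\tfrac12(0.6)+\tfrac12(0.2)=0.4$, the expected disagreement is $\tfrac12(0.48)+\tfrac12(0.32)=0.4$, yet at level $0.4$ (attained only by class $0$ at $x_1$, whose label is $0$) the class-aggregated ratio is $1\neq 0.4$. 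With that substitution, and your remark about realizing $\tilde h$ as a mixture of deterministic hypotheses, the argument goes through.
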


\begin{proof}
Consider an example where $\tilde{h}$ assigns a probability of either $0.1$ or $0.2$ to class $0$.  In particular, assume that with $0.5$ probability over the draws of  $(x,y) \sim \scrD$, $\tilde{h}_0(x)=0.1$ and with $0.5$ probability, $\tilde{h}_0(x)=0.2$. The expected disagreement rate (\texttt{EDR}) of this classifier is given by $\E_{\scrD}\left[2\tilde{h}_0(x)\tilde{h}_1(x)\right] = 2\left(\frac{0.1 \cdot 0.9 +  0.2 \cdot 0.8}{2} \right)= 0.25$.

Now, it can be verified that the binary classification setting, class-aggregated and class-wise calibration are identical. Therefore, letting $p( Y = 0 \mid \tilde{h}_0(X)=0.1) \triangleq \epsilon_1$ and $p( Y = 0 \mid \tilde{h}_0(X)=0.2) \triangleq \epsilon_2$, our goal is to show that it is possible for $\epsilon_1 \neq 0.1$ or $\epsilon_2 \neq 0.2$ and still  
have the expected test error (\texttt{ETE}) equal the \texttt{EDR} of $0.25$. Now, the \texttt{ETE}  on $\scrD$ conditioned on $\tilde{h}_0(x)=0.1$ is given by $(0.1 (1- \epsilon_1) + 0.9 \epsilon_1)$ and on $\tilde{h}_0(x)=0.2$ is given by $(0.2 (1- \epsilon_2) + 0.8 \epsilon_1)$. Thus, the \texttt{ETE} on $\scrD$ is given by $0.15 + 0.5(0.8 \epsilon_1 + 0.6 \epsilon_2)$. We want $0.15 + 0.5(0.8 \epsilon_1 + 0.6 \epsilon_2) = 0.25$ or in other words, $0.8 \epsilon_1 + 0.6 \epsilon_2 = 0.2$.

Observe that while $\epsilon_1 = 0.1$ and $\epsilon_2 = 0.2$ is one possible solution where $\tilde{h}$ would satisfy class-wise calibration/class-aggregated calibration, there are also infinitely many other solutions for this equality to hold (such as say $\epsilon_1 = 0.25$ and $\epsilon_2 = 0$) where calibration does not hold. Thus, class-aggregated/class-wise calibration is just one out of infinitely many possible ways in which $\tilde{h}$ could be configured to satisfy GDE. 
\end{proof}

\subsection{Comparing CACE to existing notions of calibration.}
\label{sec:calibration_comp}
 Calibration in machine learning literature \citep{guo2017calibration,nixon2019measuring} is often concerned only with the confidence level of the top predicted class for each point. While top-class calibration is weaker than class-wise calibration, it is neither stronger nor weaker than class-aggregated calibration. 
 Class-wise calibration is a notion of calibration that has appeared originally under different names in \citet{zadrozny01obtaining,wu2021should}. On the other hand, the only closest existing notion to class-aggregated calibration seems to be that of {\em static calibration} in \citet{nixon2019measuring}, where it is only indirectly defined. Another existing notion of calibration for the multi-class setting  is that of {\em strong calibration} \citep{vaicenavicius19evaluation,widmann19calibration} which evaluates the accuracy of the model conditioned on $\tilde{h}(X)$ taking a particular value in the $K$-simplex. This is significantly stronger than class-wise calibration since this would require about $\exp(K)$ many equalities to hold rather than just the $K$ equalities in Definition~\ref{def:calibration}.

  \subsection{The effect of different stochasticity.}
  \label{sec:diff-stoch}
  Compared to \texttt{AllDiff/DiffData}, \texttt{DiffInit/Order} is still well-calibrated, with only slight deviations. 
  Why is varying the training data almost as effective in calibration as varying the random seed?
   One might propose the following natural hypothesis in the context of \texttt{DiffOrder} vs \texttt{DiffData}. In the first few steps of SGD, the data seen under two different reorderings are likely to not intersect at all, and hence the two trajectories would \emph{initially} behave as though being trained on two independent datasets. Further, if the first few steps largely determine the kind of minimum that the training falls into, then it is reasonable to expect that the stochasticity in data and in ordering both have the same effect on calibration. 
   However, this hypothesis falls apart
when we try to understand why two runs with the {\em same ordering} and {\em different initialization} (\texttt{DiffInit}) exhibits the same effect as \texttt{DiffData}. Indeed, \citet{fort2019deep} have empirically shown that two such SGD runs explore diverse regions in the function space. 
Hence, we believe that there is a more nuanced reason behind why different types of stochasticity have a similar effect on ensemble calibration. One promising hypothesis for this could be the multi-view hypothesis from \citet{zhu20ensemble}, which show that different random initializations could encourage the network to latch on to different predictive features (even when exposed to the same training set), and thus result in ensembles with better test accuracy. Extending their study to understand similar effects on calibration would be a useful direction for future research.

\section{Appendix: Experimental Details}

\subsection{Pairwise Disagreement}
\label{sec:exp_details}
In this work, the main architectures we used are ResNet18 with the following hyperparameter configurations:
\begin{enumerate}
    \item width multiplier: \{$1\times$, $2\times$\}
    \item initial learning rate: \{$0.1$, $0.05$\}
    \item weight decay: \{$0.0001$, $0.0$\}
    \item minibatch size: \{ $200$, $100$\}
    \item data augmentation: \{No, Yes\}
\end{enumerate}
Width multiplier refers to how much wider the model is than the architecture presented in \citet{he2016deep} (i.e. every filter width is multiplied by the width multiplier). All models are trained with SGD with momentum of $0.9$. The learning rate decays $10\times$ every 50 epochs. The training stops when the training accuracy reaches $100\%$.

For Convolutional Neural Network experiments, we use architectures similar to Network-in-Network \citep{lin2013network}. On a high level, the architecture contains blocks of $3\times3$ convolution followed by two $1\times 1$ convolution (3 layers in total). Each block has the same width and the final layer is projected to output class number with another $1\times 1$ convolution followed by a global average pooling layer to yield the final logits. Other differences from the original implementation are that we do not use dropout and add batch normalization layer is added after every layer. The hyperparameters are:
\begin{enumerate}
    \item depth: \{7, 10, 13\}
    \item width: \{128, 256, 384\}
    \item weight decay: \{$0.001$, $0.0$\}
    \item minibatch size: \{200, $100$, 300\}
\end{enumerate}
All models are optimized with momentum of 0.9 and uses the same learning rate schedule as ResNet18.

For Fully Connected Networks, we use:
\begin{enumerate}
    \item depth: \{1,2,3,4\}
    \item width: \{128, 256, 384, 512\}
    \item weight decay: \{$0.0$, $0.001$\}
    \item minibatch size: \{$100$, $200$, 300\}
\end{enumerate}
All models are optimized with momentum of 0.9 and uses the same learning rate schedule as ResNet18.\\

\textbf{Distribution shift and pre-training experiments.}
On all our experiments on the PACS dataset, we use ResNet50 (with Batch Normalization layers frozen and the final fully-connected layer removed) as our featurizer and one linear layer as our classifier. All our models are trained until 3000 steps after reaching 0.995 training accuracy with the following hyperparameter configurations:

\begin{enumerate}
    \item learning rate: $0.00005$
    \item weight decay: 0.0
    \item learning rate decay: None
    \item minibatch size: 100
    \item data augmentation: Yes
\end{enumerate}

On each of these domains, we train 5 pairs of ResNet50 with a linear layer on top, varying the random seeds (keeping hyperparameters constant). Both models in a pair are trained on the same $80\%$ of the data, and only differ in their initialization, data ordering and the augmentation on the data. We then evaluate the test error and disagreement rate of all pairs on each of the four domains.  We consider both randomly initialized models and ImageNet pre-trained models~\citep{deng2009imagenet}. For pre-trained models, only the linear layer is initialized differently between the two models in a pair.

\subsection{Ensemble}
\label{sec:ensemble}
For ensembles, unless specified otherwise, we use the combination of the first option for each hyperparameters in Section \ref{sec:exp_details}.

\subsubsection{Finite-Sample Approximation of CACE}
\label{sec:ace_approx}
For every ensemble experiment, we train a standard ResNet 18 model (width multiplier $1\times$, initial learning rate $0.1$, weight decay $0.0001$, minibatch size $200$ and no data augmentation).

To estimate the calibration, we use the testset $\gD_{test}$. We split $[0,1]$ into $10$ equally sized bins. For a class $k$, we can group all $(x,y) \in \gD_{test}$ into different bins $\gB_{i}^k$ according to $\tildeh_k(x)$ (all bins have boundaries that do not overlap with other bins). In total, there are $10 \times K$ bins.
\begin{align}
    \gB_{i}^k = \left\{(x, y) \mid \text{lower}(\gB_{i}^k) \leq \tildeh_k(x) < \text{upper}(\gB_{i}^k) \,\, \text{and}\,\, (x,y) \in \gD_{test}\right\}
\end{align}
Where \textit{upper} and \textit{lower} are the boundaries of the bin.
To mitigate the effect of insufficient samples for some of the middling confidence value in the middle (e.g. $p=0.5$), we further aggregate the calibration accuracy over the classes into a single bin $\gB_i = \bigcup_{k=1}^K \gB_i^k$ in a weighted manner. Concretely, for each bin, we sum over all the classes when computing the accuracy:
\begin{align}
    \text{acc}(\gB_i) = \frac{1}{\sum_{k=1}^K |\gB^k_i|} \sum_{k=1}^K\sum_{(x,y) \in \gB^k_i} \mathbbm{1}[y=k] = \frac{1}{|\gB_i|} \sum_{k=1}^K\sum_{(x,y) \in \gB^k_i} \mathbbm{1}[y=k]
\end{align}

To quantify the how ``far'' the ensemble is from the ideal calibration level, we use the Class Aggregated Calibration Error (CACE) which is an average of how much each bin deviates from $y=x$ weighted by the number of samples in the bin:

\begin{align}
    \widehat{CACE} =  \sum_{i=1}^{N_\gB} \frac{|\gB_i|}{|\gD_{test}|} \left| \text{acc}(\gB_i) - \text{conf}(\gB_i)  \right|
\end{align}

where $N_\gB$ is number of bins (usually 10 in this paper unless specified otherwise), $\text{conf}(\gB_i)$ is the ideal confidence level of the bin, which we set to the average confidence of all data points in the bin. This is the sample-based approximation of definition \ref{def:deviation}.

\subsubsection{Finite-Sample Approximation of ECE}
ECE is a widely used metric for measuing calibration. For completeness, we will reproduce its approximation here. Let $\hat{Y}$ be the class with highest probability under $\tildeh$ (we are omitting the dependency on $X$ in the notation since it is clear):
\begin{align}
    \hat{Y} = \argmax_{k \in [K]} \tildeh_k(X)
\end{align}
We once again split $[0,1]$ into $10$ equally sized bins but do not divide further into $K$ classes. Each bin is constructed as:
\begin{align}
    \gB_{i} = \left\{(x, y) \mid \text{lower}(\gB_{i}) \leq \tildeh_{{\hat{y}}}(x) < \text{upper}(\gB_{i}) \,\, \text{and}\,\, (x,y) \in \gD_{test}\right\}
\end{align}
With the same notation used for CACE, the accuracy is computed as:
\begin{align}
    \text{acc}(\gB_i) = \frac{1}{|\gB_i|} \sum_{(x,y) \in \gB_i} \mathbbm{1}[y=\hat{y}] 
\end{align}
Finally, the approximation of ECE is computed as the following:
\begin{align}
    \widehat{ECE} =  \sum_{i=1}^{N_\gB} \frac{|\gB_i|}{|\gD_{test}|} \left| \text{acc}(\gB_i) - \text{conf}(\gB_i)  \right|
\end{align}

\newpage
\section{Additional Empirical Results}
\label{sec:more_exp}

\subsection{Additional Figures}

\begin{figure}[h]
     \centering
     \begin{subfigure}[t]{0.23\textwidth}
         \centering
         \includegraphics[width=\textwidth]{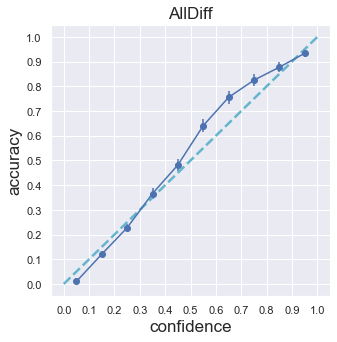}
     \end{subfigure}
         \begin{subfigure}[t]{0.23\textwidth}
         \centering
         \includegraphics[width=\textwidth]{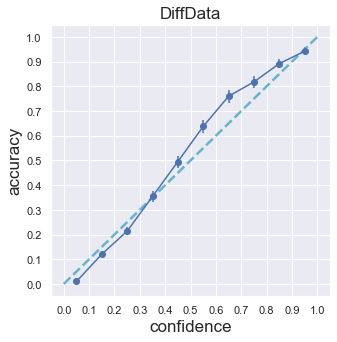}
     \end{subfigure}
     \centering
     \begin{subfigure}[t]{0.23\textwidth}
         \centering
         \includegraphics[width=\textwidth]{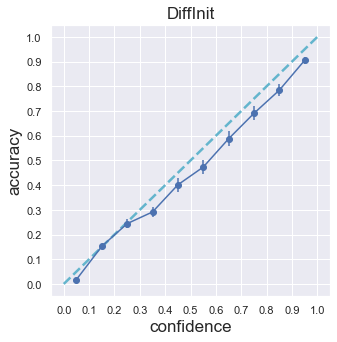}
     \end{subfigure}
 \begin{subfigure}[t]{0.23\textwidth}
         \centering
         \includegraphics[width=\textwidth]{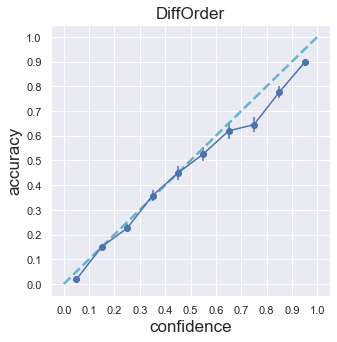}
     \end{subfigure}
    \caption{\textbf{Calibration on 2k subset of CIFAR10}: Calibration plot of different ensembles of 100 ResNet18 trained on CIFAR10 with 2000 training points.} \label{fig:calibration-cifar10-2k} 
\end{figure}

 \begin{figure}[h]
     \centering
     \begin{subfigure}[t]{0.23\textwidth}
         \centering
         \includegraphics[width=\textwidth]{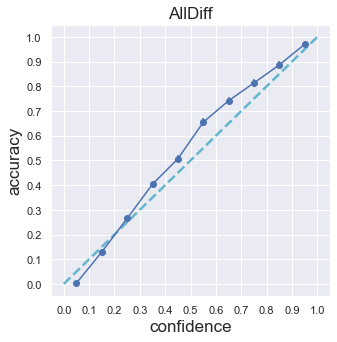}
     \end{subfigure}
         \begin{subfigure}[t]{0.23\textwidth}
         \centering
         \includegraphics[width=\textwidth]{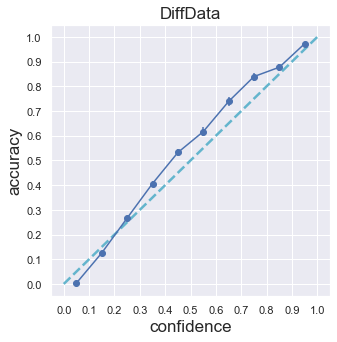}
     \end{subfigure}
     \centering
     \begin{subfigure}[t]{0.23\textwidth}
         \centering
         \includegraphics[width=\textwidth]{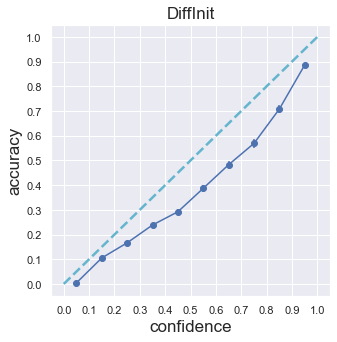}
     \end{subfigure}
 \begin{subfigure}[t]{0.23\textwidth}
         \centering
         \includegraphics[width=\textwidth]{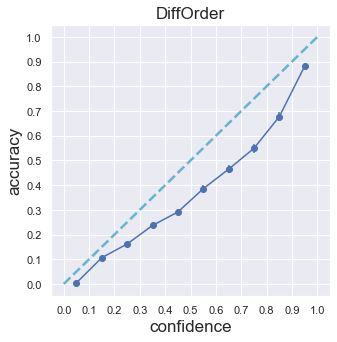}
     \end{subfigure}
    \caption{\textbf{Calibration on CIFAR100}: Calibration plot of different ensembles of 100 ResNet18 trained on CIFAR100 with 10000 data points.}
        \label{fig:calibration-cifar100}
        \vspace{5mm}
\end{figure}

\begin{figure}[h]
    \centering
    \includegraphics[width=1\textwidth]{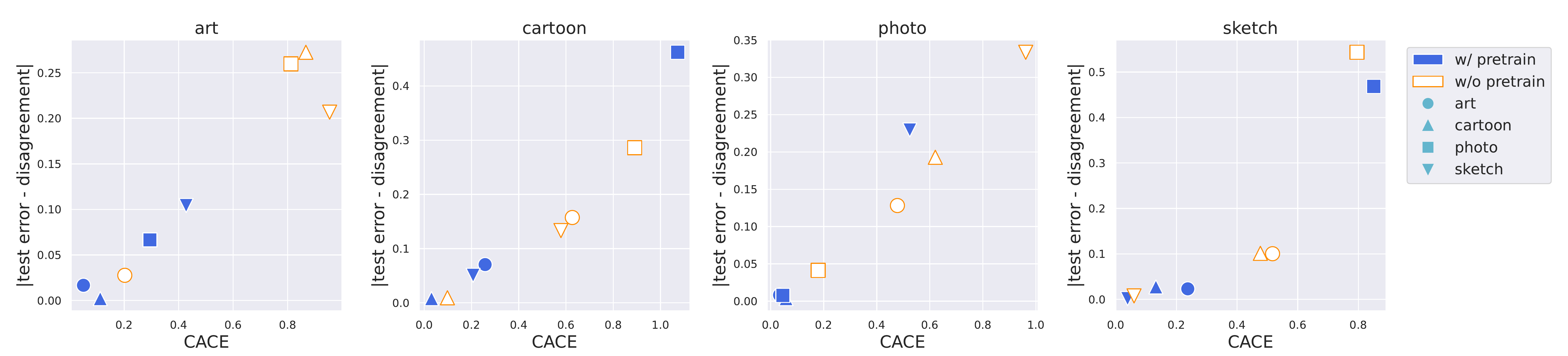}
    \caption{\textbf{Calibration error vs. deviation from GDE under distribution shift}: The scatter plots of CACE (x-axis) vs the gap between the test error and disagreement rate (y-axis) averaged over an ensemble of 10 ResNet50 models trained on PACS. Each plot corresponds to models evaluated on the domain specified in the title. The source/training domain is indicated by different marker shapes. }
    \label{fig:distributionshift_cace}
\end{figure}

\begin{table}[h!]
\small
\centering
Pretrained 
\begin{tabular}{c|ccc|c}
\toprule
  &  \textbf{Test Error} & \textbf{Disagreement} & \textbf{Gap}   &$\textbf{CACE}^{(\textbf{10})}$        \\ \hline
\texttt{Art $\rightarrow$ Art} & $0.0518 \, {\scriptstyle \pm 0.0107 \, }$ & $0.0685 \, {\scriptstyle \pm 0.0133\, }$ & $0.0166$ & $0.0505$ \\

\texttt{Art $\rightarrow$ Cartoon} &  $0.3229 \, {\scriptstyle \pm 0.0365\, }$ & $ 0.2524 \, {\scriptstyle \pm 0.0446\, }$ & $ 0.0705$ & $ 0.2577$  \\
\texttt{Art $\rightarrow$ Photo}  &  $0.0509 \, {\scriptstyle \pm 0.0097\, }$ & $0.0592 \, {\scriptstyle \pm 0.0136\, }$ & $0.0082$  & $ 0.0335$   \\
\texttt{Art $\rightarrow$ Sketch}  &  $0.3871 \, {\scriptstyle \pm 0.0613\, }$ & $0.3639 \, {\scriptstyle \pm 0.079\, }$ & $0.0231$  & $0.2374$ \\ 
\hline 
\texttt{Cartoon $\rightarrow$ Art} &  $0.2555\, {\scriptstyle \pm 0.0203\, }$ & $0.2534 \, {\scriptstyle \pm 0.0262\, }$ & $0.0020$ & $0.1121$    \\
\texttt{Cartoon $\rightarrow$ Cartoon} &  $0.0303\, {\scriptstyle \pm  0.0118\, }$ & $0.0380 \, {\scriptstyle \pm 0.0150\, }$ & $0.0077$ & $0.0308$    \\
\texttt{Cartoon $\rightarrow$ Photo} &  $0.1361\, {\scriptstyle \pm 0.0227\, }$ & $0.1327 \, {\scriptstyle \pm 0.0201\, }$ & $0.0034$ & $0.0580$    \\ 
\texttt{Cartoon $\rightarrow$ Sketch} &  $0.2672\, {\scriptstyle \pm 0.0201\, }$ & $0.2398 \, {\scriptstyle \pm 0.0326\, }$ & $0.0273$ & $0.1322$    \\ 
\hline 
\texttt{Photo $\rightarrow$ Art} &  $0.3315\, {\scriptstyle \pm 0.0487\, }$ & $0.2649 \, {\scriptstyle \pm 0.0624\, }$ & $0.0666$ & $0.2943$    \\
\texttt{Photo $\rightarrow$ Cartoon} &  $0.6721\, {\scriptstyle \pm 0.0545\, }$ & $0.2100 \, {\scriptstyle \pm  0.0601\, }$ & $0.4621$ & $1.0725$    \\
\texttt{Photo $\rightarrow$ Photo} &  $0.0245\, {\scriptstyle \pm 0.0110\, }$ & $0.0322 \, {\scriptstyle \pm 0.0125\, }$ & $0.0076$ & $0.0460$    \\
\texttt{Photo $\rightarrow$ Sketch}&  $0.7180\, {\scriptstyle \pm 0.0774\, }$ & $0.2497 \, {\scriptstyle \pm 0.1350\, }$ & $0.4683$ & $0.8507$    \\
\hline 
\texttt{Sketch $\rightarrow$ Art} &  $0.5187\, {\scriptstyle \pm 0.0598\, }$ & $0.4144 \, {\scriptstyle \pm 0.0769\, }$ & $0.1042$ & $0.4274$\\ 
\texttt{Sketch $\rightarrow$ Cartoon} &  $0.3064\, {\scriptstyle \pm 0.0330\, }$ & $0.2557 \, {\scriptstyle \pm 0.0349\, }$ & $0.0506$ & $0.2069$\\ 
\texttt{Sketch $\rightarrow$ Photo} &  $0.5203\, {\scriptstyle \pm 0.0435\, }$ & $0.2908 \, {\scriptstyle \pm 0.0618\, }$ & $0.2295$ & $0.5245$\\ 
\texttt{Sketch $\rightarrow$ Sketch} &  $0.0443\, {\scriptstyle \pm 0.0067\, }$ & $0.0460 \, {\scriptstyle \pm 0.0074\, }$ & $0.0016$ & $0.0389$\\ 
\bottomrule
\end{tabular}

Not Pretrained 
\begin{tabular}{c|ccc|c}
\toprule
  &  \textbf{Test Error} & \textbf{Disagreement} & \textbf{Gap}   &$\textbf{CACE}^{(\textbf{10})}$        \\ \hline
\texttt{Art $\rightarrow$ Art} & $0.3821 \, {\scriptstyle \pm 0.0137\, }$ & $0.3545 \, {\scriptstyle \pm 0.0361\, }$ & $0.0276$ & $0.2021$ \\
\texttt{Art $\rightarrow$ Cartoon} &  $0.6337 \, {\scriptstyle \pm 0.0320\, }$ & $ 0.4764 \, {\scriptstyle \pm 0.0481\, }$ & $ 0.1573$ & $ 0.6267$  \\
\texttt{Art $\rightarrow$ Photo}  &  $0.4443 \, {\scriptstyle \pm 0.0277\, }$ & $0.3161 \, {\scriptstyle \pm 0.0360\, }$ & $0.12821$  & $ 0.4778$   \\
\texttt{Art $\rightarrow$ Sketch}  &  $0.6517 \, {\scriptstyle \pm 0.0337\, }$ & $0.5513 \, {\scriptstyle \pm 0.0697\, }$ & $0.1004$  & $0.5169$ \\ 
\hline 
\texttt{Cartoon $\rightarrow$ Art} &  $0.6911\, {\scriptstyle \pm 0.0158\, }$ & $0.4186 \, {\scriptstyle \pm 0.0526\, }$ & $0.2725$ & $0.8669$    \\
\texttt{Cartoon $\rightarrow$ Cartoon} &  $0.1910\, {\scriptstyle \pm 0.0163\, }$ & $0.1817 \, {\scriptstyle \pm 0.0221\, }$ & $0.0092$ & $0.0981$    \\
\texttt{Cartoon $\rightarrow$ Photo} &  $0.6   \, {\scriptstyle \pm 0.0439\, }$ & $0.4070 \, {\scriptstyle \pm 0.0522\, }$ & $0.1929$ & $0.6207$    \\ 
\texttt{Cartoon $\rightarrow$ Sketch} &  $0.6084\, {\scriptstyle \pm 0.0720\, }$ & $0.5072 \, {\scriptstyle \pm 0.0757\, }$ & $0.1012$ & $0.4767$    \\ 
\hline 
\texttt{Photo $\rightarrow$ Art} &  $0.6899\, {\scriptstyle \pm 0.0149\, }$ & $0.4301 \, {\scriptstyle \pm 0.0385\, }$ & $0.8119$ & $0.2598$    \\
\texttt{Photo $\rightarrow$ Cartoon} &  $0.7293\, {\scriptstyle \pm 0.0222\, }$ & $0.4431 \, {\scriptstyle \pm  0.0566\, }$ & $0.8905$ & $0.2862$    \\
\texttt{Photo $\rightarrow$ Photo} &  $0.2281\, {\scriptstyle \pm 0.0168\, }$ & $0.1868 \, {\scriptstyle \pm 0.0258\, }$ & $0.1791$ & $0.0413$    \\
\texttt{Photo $\rightarrow$ Sketch}&  $0.7823\, {\scriptstyle \pm 0.021\, }$ & $0.2388\, {\scriptstyle \pm 0.1180\, }$ & $0.7954$ & $0.5435$    \\
\hline 
\texttt{Sketch $\rightarrow$ Art} &  $0.8110\, {\scriptstyle \pm 0.0164\, }$ & $0.6042\, {\scriptstyle \pm 0.0629\, }$ & $0.2068$ & $0.9540$\\ 
\texttt{Sketch $\rightarrow$ Cartoon} &  $0.6215\, {\scriptstyle \pm  0.0114\, }$ & $0.4882 \, {\scriptstyle \pm 0.0522\, }$ & $0.1332$ & $0.5785$\\ 
\texttt{Sketch $\rightarrow$ Photo} &  $0.8204\, {\scriptstyle \pm  0.0202\, }$ & $0.4868 \, {\scriptstyle \pm 0.0875\, }$ & $0.3335$ & $0.9615$\\ 
\texttt{Sketch $\rightarrow$ Sketch} &  $0.1066\, {\scriptstyle \pm 0.0094\, }$ & $0.0989 \, {\scriptstyle \pm 0.0125\, }$ & $0.0076$ & $ 0.0601$ \\
\bottomrule
\end{tabular}

\vspace{1 em}
\caption{\textbf{Calibration error vs. deviation from GDE under distribution shift:} Test error, disagreement rate, the gap between the two, and CACE for ResNet18 on PACS over 10 models each.}
\label{tab: calibration-sample-size-pacs}
\end{table}

\subsection{Cifar100 Calibration Table}
Here we present the calibration error for ResNet18 trained on CIFAR100 with 10k training examples.
\begin{table}[h!]
\small
\centering
\begin{tabular}{c|ccc|c|c}
\toprule
  &  \textbf{Test Error} & \textbf{Disagreement} & \textbf{Gap}   &$\textbf{CACE}^{(\textbf{100})}$ & \textbf{ECE}        \\ \hline
\texttt{AllDiff} &  $0.679 \, {\scriptstyle \pm \, 0.0098}$ & $0.6947 \, {\scriptstyle \pm \, 0.0076}$ & 0.0157  & 0.1300 & 0.0469 \\
\texttt{DiffData}  &  $0.682 \, {\scriptstyle \pm \, 0.0110}$ & $0.6976 \, {\scriptstyle \pm \, 0.0074}$ & 0.0150  &0.1354 &  0.0503  \\
\texttt{DiffInit}  &   $0.681\, {\scriptstyle \pm \, 0.0100}$ & $0.5945 \, {\scriptstyle \pm \, 0.0127}$ & 0.0865 &0.3816 & 0.1400 \\  
\texttt{DiffOrder} &  $0.679\, {\scriptstyle \pm \, 0.0097}$ & $0.5880 \, {\scriptstyle \pm \, 0.0103}$ &  0.0910 &0.3926 & 0.1449  \\
\bottomrule
\end{tabular}
\vspace{1 em}
\caption{\textbf{Calibration error vs. deviation from GDE for CIFAR100:} Test error, disagreement rate, the gap between the two, and ECE and CACE for ResNet18 on CIFAR100 with 10k training examples computed over 100 models.}
\label{tab: calibration-sample-size-cifar100}
\end{table}

\subsection{Other datasets and architectures}

In Fig~\ref{fig:other-datasets}, we provide scatter plots for fully-connected networks (FCN) on MNIST, and convolutional networks (CNN) on CIFAR10. We observe that when trained on the whole MNIST dataset, there is larger deviation from the $x=y$ behavior (see left-most image). But when we reduce the dataset size to $2000$, we recover the GDE observation on MNIST. We observe that the CNN settings satisfies GDE too. 

 \begin{figure}[h]
     \centering
     \begin{subfigure}[t]{0.23\textwidth}
         \centering
         \includegraphics[width=\textwidth]{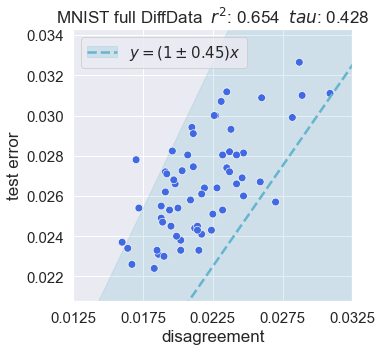}
         \caption{MNIST FCN} %
     \end{subfigure}
     \begin{subfigure}[t]{0.23\textwidth}
         \centering
         \includegraphics[width=\textwidth]{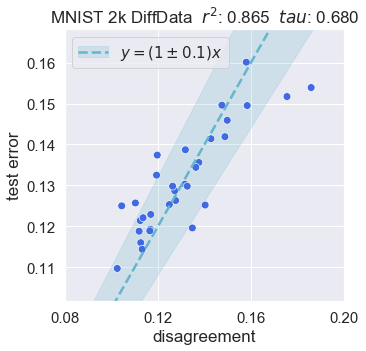}
         \caption{MNIST FCN, 2k datapoints.} %
          \label{fig:scatter-mnist-2k}
     \end{subfigure}
     \begin{subfigure}[t]{0.23\textwidth}
         \centering
         \includegraphics[width=\textwidth]{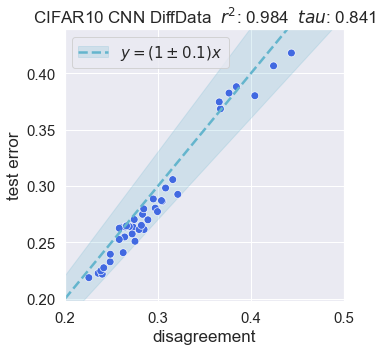}
         \caption{CIFAR10 CNN}
     \end{subfigure}
     \hspace{0.05\textwidth}
     
    \caption{Scatter plots for fully-connected and convolutional networks on MNIST and CIFAR-10 respectively.}
        \label{fig:other-datasets}
\end{figure}

\subsection{Error distribution of Ensembles}
\label{sec:error-histogram}
Here (Fig~\ref{fig:err_dist}) we show the error distribution of the ensemble similar to \citetalias{nakkiran20distributional}. The x-axis of these plots represent $1-\tildeh_y(X)$ in the context of our work. As \citetalias{nakkiran20distributional} note, these plots are {\em not} bimodally distributed on zero error and random-classification-level error (of $\frac{K-1}{K}$ where $K$ is the number of classes). This disproves the easy-hard hypothesis as discussed in the main paper. As a side note, we observe that all these error distributions can be fit well by a Beta distribution.

 \begin{figure}[h]
     \centering
     \begin{subfigure}[t]{0.23\textwidth}
         \centering
         \includegraphics[width=\textwidth]{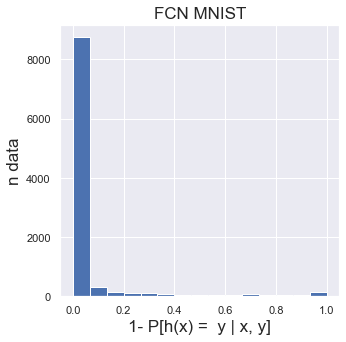}
         \caption{Error distribution for the MNIST FCN experiment}
     \end{subfigure}
     \hfill
     \begin{subfigure}[t]{0.23\textwidth}
         \centering
         \includegraphics[width=\textwidth]{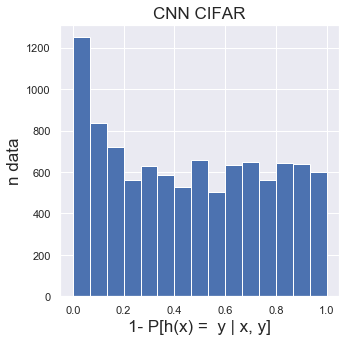}
         \caption{Error distribution for the CIFAR10 CNN experiment}
     \end{subfigure}
     \hfill
     \begin{subfigure}[t]{0.23\textwidth}
         \centering
         \includegraphics[width=\textwidth]{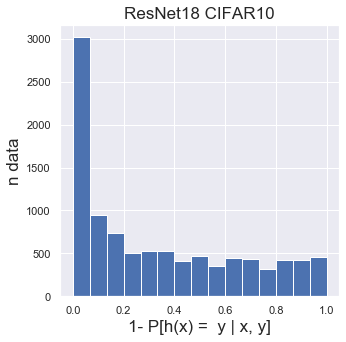}
         \caption{Error distribution for the CIFAR10 ResNet18 experiment with \texttt{SameInit}}
     \end{subfigure}
    \caption{Error distributions for different experiments}
    \label{fig:err_dist}
\end{figure}

\subsection{Calibration Confidence Histogram}

Here (Fig~\ref{fig:histogram}) we report the number of points that fall into each bin in calibration plots. In other words, for each value of $p$, we report the number of times the ensemble $\tilde{h}$ satisfies $\tilde{h}_k(x) \approx p$ for some $k$ and some $x$.

 \begin{figure}[!t]
      \centering
     \begin{subfigure}[t]{0.25\textwidth}
         \centering
         \includegraphics[width=\textwidth]{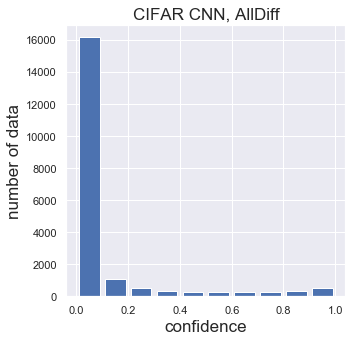}
         \caption{CIFAR10 + CNN}
     \end{subfigure}
     \centering
     \begin{subfigure}[t]{0.25\textwidth}
         \centering
         \includegraphics[width=\textwidth]{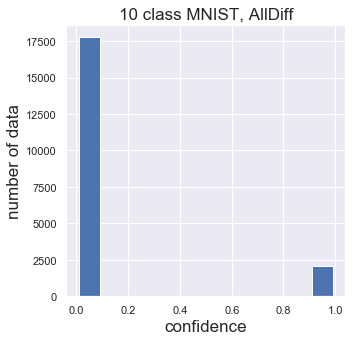}
         \caption{MNIST + 2 layer FCN}
     \end{subfigure}
     \begin{subfigure}[t]{0.25\textwidth}
         \centering
         \includegraphics[width=\textwidth]{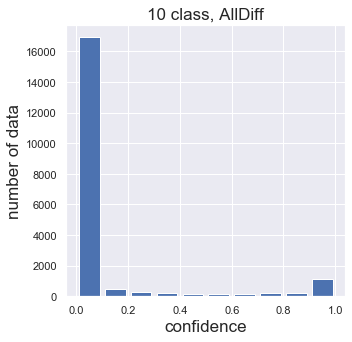}
         \caption{Full Cifar10 + ResNet18}
     \end{subfigure}
    \caption{Histogram of calibration confidence for different settings.}
        \label{fig:histogram}
\end{figure}

\subsection{Combining Stochasticity}
In Fig~\ref{fig:same-data}, for the sake of completeness, we consider a setting where both the random initialization and the data ordering varies between two runs. We call this setting the \texttt{SameData} setting. We observe that this setting behaves similar to \texttt{DiffData} and \texttt{DiffInit}.

   \begin{figure}[h]
     \centering
     \begin{subfigure}[t]{0.23\textwidth}
         \centering
         \includegraphics[width=\textwidth]{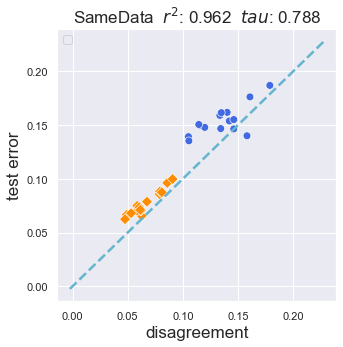}
         \label{fig:cali5}
     \end{subfigure}
     \begin{subfigure}[t]{0.23\textwidth}
         \centering
         \includegraphics[width=\textwidth]{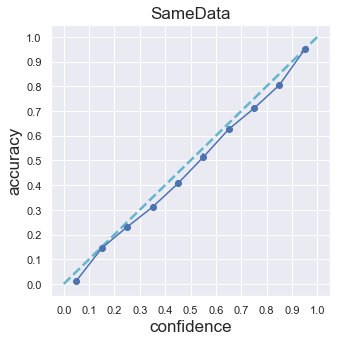}
         \caption{Calibration plot}
         \label{fig:cali6}
     \end{subfigure}
    \caption{The scatter plot and calibration plot for model pairs that use the different initialization and different data ordering.}
    \label{fig:same-data}
    \vspace{-4mm}
\end{figure}

\subsection{Class-wise Calibration vs Class-aggregated Calibration}
\label{sec:individual-class}
In Fig~\ref{fig:individual-class}, we report the calibration plots for a few random classes in the CIFAR10 and CIFAR100 setup and compare it with the class-aggregated calibration plots. We observe that the class-wise plots have a lot more variance, indicating that calibration within each class may not always be perfect. However, when aggregating across classes, calibration becomes much more well-behaved. This suggests that the calibration is smoothed over all the classes. It is worth noting that a similar effect also happens for ECE, although not reported here.

   \begin{figure}[!t]
     \centering
    \begin{subfigure}[t]{0.23\textwidth}
         \centering
         \includegraphics[width=\textwidth]{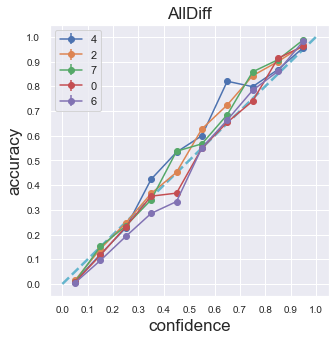}
         \caption{CIFAR10}
     \end{subfigure}
    \begin{subfigure}[t]{0.23\textwidth}
         \centering
         \includegraphics[width=\textwidth]{plots/calibration_all_diff.png}
         \caption{CIFAR10's calibration plot}
     \end{subfigure}
     \begin{subfigure}[t]{0.23\textwidth}
         \centering
         \includegraphics[width=\textwidth]{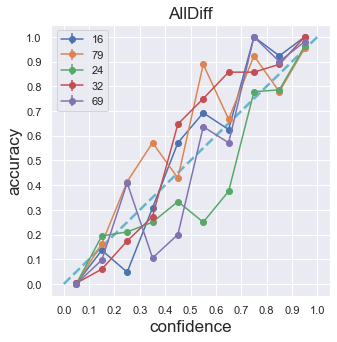}
         \caption{CIFAR100}
     \end{subfigure}
          \begin{subfigure}[t]{0.23\textwidth}
         \centering
         \includegraphics[width=\textwidth]{plots/c100_all_diff_calibration.png}
         \caption{CIFAR100's calibration plot}
     \end{subfigure}
    \caption{The calibration plot for 5 randomly selected individual classes vs the aggregated calibration plot for ResNet18 trained on CIFAR10 and CIFAR100.}
    \label{fig:individual-class}
    \vspace{-4mm}
\end{figure}

\subsection{Correlation values in Fig~\ref{fig:scatter}}
\label{sec:correlation}
In the main paper figures, we quantified correlated via $R^2$ and $\tau$ for scatter plots that include both data-augmented models and non-data-augmented models. Here, we report these values specific to each group of models.

\begin{table}[h!]
\small
\centering
\begin{tabular}{c|cccc}
\toprule
       & \texttt{AllDiff} & \texttt{DiffData} & \texttt{DiffOrder}& \texttt{DiffInit} \\ \hline
w/o aug &  $0.888$ & $0.977$ & 0.728 & 0.923 \\
w aug &  0.984 & 0.963 & 0.737 & 0.881 \\
both &0.986  & 0.998 &0.941 & 0.983 \\
\bottomrule
\end{tabular}
\caption{$R^2$ values}
\label{tab:r2}
\end{table}

\begin{table}[h!]
\small
\centering
\begin{tabular}{c|cccc}
\toprule
       & \texttt{AllDiff} & \texttt{DiffData} & \texttt{DiffOrder}& \texttt{DiffInit} \\ \hline
       w/o aug & 0.752 & 0.891 & 0.582 & 0.771 \\
w aug & 0.829 & 0.891 & 0.626 & 0.650 \\
both & 0.899 & 0.948 & 0.807 & 0.858 \\
\bottomrule
\end{tabular}
\caption{$\tau$ values}
\label{tab:tau}
\end{table}
\newpage
\subsection{More than one pair of models}

    Here, we show the ResNet18 CIFAR-10 \texttt{DiffInit} experiments (no data augmentation) with only one pair of models v.s. the average of 4 pairs of models. We see that while 4 pairs of models does improve the correlation, the improvement is only marginal. This suggests that only using a single pair of models may be sufficient for estimating the generalization error.

   \begin{figure}[h]
     \centering
    \begin{subfigure}[t]{0.23\textwidth}
         \centering
         \includegraphics[width=\textwidth]{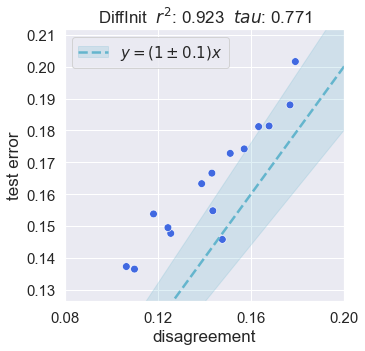}
         \caption{The scatter plot for the disagreements of 1 pair of models.}
     \end{subfigure}
     \hspace{5mm}
    \begin{subfigure}[t]{0.23\textwidth}
         \centering
         \includegraphics[width=\textwidth]{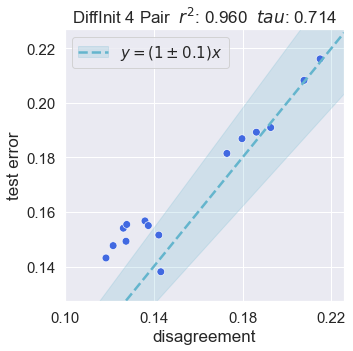}
         \caption{The scatter plot for the average disagreements of 4 pairs of models.}
     \end{subfigure}
    \end{figure}
    
    However, if we instead measure the distance from the $y=x$ line. Specifically, each pair's deviation is measured as
    \begin{align}
        \frac{|\texttt{TestError} - \texttt{Disagreement}|}{0.5 (\texttt{TestError} + \texttt{Disagreement})}.
    \end{align}
    The denominator normalizes the deviation so hyperparameters with different performance contribute equally. Then, we observed that 1-pair achieves an average deviation of $0.112$ while the 4-pair achieves $0.034$. This shows that if one is interested in estimating the exact generalization error, using more pairs of models would help.

\end{document}